\documentclass{opt2026arxiv}   % OPT 2026 class (JMLR-based); [anon] removed for the camera-ready
\usepackage[utf8]{inputenc}
\usepackage[T1]{fontenc}
\usepackage{mathtools}
\usepackage{booktabs}
\usepackage[expansion=false]{microtype}
\usepackage{xcolor}
\hypersetup{hidelinks,
  pdftitle={Guarded Block Transfers for Polytope Distance: Linear Convergence with Explicit Constants and Gap-Certified Screening for SVM Training},
  pdfauthor={Mayank Gupta}}
\renewenvironment{proof}[1][]{\par\noindent{\bfseries\upshape Proof#1.}\ }{\jmlrQED}
\newcommand{\conv}{\operatorname{conv}}
\newcommand{\PW}{\mathrm{PW}}
\newcommand{\ip}[2]{\langle #1, #2 \rangle}
\newcommand{\norm}[1]{\lVert #1 \rVert}
\newcommand{\R}{\mathbb{R}}
\newcommand{\gFW}{g^{\mathrm{FW}}}
\newcommand{\gA}{g^{\mathrm{A}}}
\newcommand{\gPW}{g^{\mathrm{PW}}}
\newcommand{\UB}{\mathrm{UB}}
\newcommand{\LB}{\mathrm{LB}}

\title[Guarded Block Transfers for Polytope Distance]{Guarded Block Transfers for Polytope Distance:\\
Linear Convergence with Explicit Constants and\\
Gap-Certified Screening for SVM Training}

\optauthor{%
\Name{Mayank Gupta} \Email{iammayankg@gmail.com}\\
\addr Independent Researcher}

\begin{document}
\maketitle

\begin{abstract}
Hard-margin SVM training is the distance problem between two convex
hulls, which the Triangle Algorithm (TA) solves with a certified
duality gap. We propose \emph{guarded block transfers}: $k$
Mitchell--Demyanov--Malozemov (MDM) transfers per scan are aggregated
into one exact line search, guarded by a comparison with the single
best transfer at a cost independent of the number of points $n$, and replaced by an away
step when the leading transfer is capacity-clipped. On the
Minkowski-difference polytope, where the objective is $1$-strongly
convex, we prove global linear convergence with explicit constants and
no combinatorial swap-step bounds; the per-step contraction is
$k$-independent and only the drop-count bound grows with $k$, so
aggregation is provably safe while the saving in scans is measured, not
proved. Since the TA certificate is the
Frank--Wolfe gap divided by the current distance, gap-safe screening
applies, and we bound the iteration after which it has removed every
zero-weight point off the optimal faces by $O\!\left(\tfrac{k}{\rho}\log\tfrac{1}{\tau}\right)$
up to logarithmic factors in the geometry, $\rho$ the rate constant and
$\tau$ the optimal-face margin. On five LIBSVM benchmarks, with fairer baselines than our submission,
the solver matches the baselines to four or five digits wherever both
converge, ties LIBSVM on sparse-support hard margins and is up to
$1.6\times$ faster than blended pairwise conditional gradients (BPCG). But our full-cache
sequential minimal optimisation (SMO) code ties or beats it on kernel
$L_2$ margins, LIBLINEAR's dual solver is up to $4.5\times$ faster on the
separable $L_2$ margin, and our solver is one to over three orders of magnitude slower than
LIBLINEAR's Newton solver on dense-support $L_2$
margins and than LIBSVM on a nearly touching kernel hull. Its edge is
the certificate, not speed.
\end{abstract}

\section{Introduction}

Training a linear hard-margin SVM is equivalent to finding the closest
points of the two classes' convex hulls: the optimal separator is the
perpendicular bisector of the witness pair and the margin is half the
hull distance \cite{bennett2000duality}. Kalantari's Triangle Algorithm
(TA) exploits this \emph{distance duality} directly \cite{kalantari2015}:
TA~I decides whether the hulls intersect, and TA~II computes their
distance while maintaining an iterate pair $(p,q)$ and a certified
interval $[\LB,\UB]$ from parallel supporting hyperplanes. The enhanced
TA (ETA) of
\cite{gupta2026} adds dot-product caching and search heuristics. Below,
ETA denotes that solver with our guarded block transfers, K-ETA its
kernelised form, and ``our SMO'' our implementation of sequential
minimal optimisation \cite{platt1998}.

The workhorse step of such solvers is the \emph{weight transfer} from
the worst active vertex to the best candidate, the
Mitchell--Demyanov--Malozemov (MDM) step \cite{mdm1974}, which is the
pairwise Frank--Wolfe (FW) step of \cite{lj2015} on the relevant
polytope. Each step costs an $O(n)$ scan and yields one transfer; we ask how much
of the scan can be reused across several transfers without weakening
the guarantee. Our contributions:

\begin{enumerate}\setlength\itemsep{0pt}
\item \textbf{Guarded block transfers} (\S\ref{sec:alg}): pair the top-$k$
receivers with the worst-$k$ active donors, take one exact line search
along the aggregated direction, \emph{guard} it by an $O(k^2)$
comparison with the single MDM step, and fall back to an away step on
capacity-clipped iterations.
\item \textbf{Linear convergence with explicit constants}
(Theorem~\ref{thm:linear}): on $Z=\conv(V)-\conv(W)$ the guarded block
algorithm inherits the pyramidal-width rate of pairwise FW up to a
constant, and the away fallback replaces the combinatorial swap-step
bound by linear drop-step counting (cf.\ \cite{tsuji2022}).
\item \textbf{Screening activation with explicit constants}
(Theorem~\ref{thm:activation}): the TA certificate is the FW gap divided
by the distance (Lemma~\ref{lem:gapid}), so gap-safe screening
\cite{ndiaye2017} applies, and we bound the iteration by which it has
removed every zero-weight point off the optimal faces.
\item \textbf{A regime study on real data} (\S\ref{sec:exp}): two
batteries on five LIBSVM datasets locate the boundary from both sides.
\end{enumerate}
Code, proofs, numerical checks and experiment scripts are at
\url{https://github.com/iammayankg/triangle-algorithm-sv}.

\section{Setting}\label{sec:setting}

Let $V=\{v_1,\dots,v_n\}$, $W=\{w_1,\dots,w_m\}\subset\R^d$. The
polytope distance problem is
\begin{equation}\label{eq:P}
  \min_{p\in\conv(V),\ q\in\conv(W)} \ \tfrac12\norm{p-q}^2 .
\end{equation}
Writing $z=p-q$ and $Z:=\conv(V)-\conv(W)=\conv\{v_i-w_j\}$,
\eqref{eq:P} is the minimum-norm-point problem $\min_{z\in Z}F(z)$,
$F(z)=\tfrac12\norm z^2$, whose objective is $1$-strongly convex and
$1$-smooth. Let $z^*$ be the minimiser, $\delta^*=\norm{z^*}$,
$h(z)=F(z)-F(z^*)$, $M=\operatorname{diam}(Z)$ and
$\delta=\PW(A)>0$ the pyramidal width \cite{lj2015} of the atom set
$A=\{v_i-w_j\}$ (a property of the atom set, not of $Z$ alone).

The algorithm keeps simplex weights $\alpha$ on $V$, $\beta$ on $W$
(iterate $z_t$ after $t$ iterations),
with $p=\sum\alpha_iv_i$, $q=\sum\beta_jw_j$; the induced product
representation $\lambda_{ij}=\alpha_i\beta_j$ of $z$ over the atoms
$v_i-w_j$ has support $\operatorname{supp}\alpha\times\operatorname{supp}\beta$.
With $g=-\nabla F(z)=q-p$, the \emph{side scores} are
$s^V_i=\ip{g}{v_i}$ and $s^W_j=\ip{p-q}{w_j}$, maintained incrementally
from cached arrays.

\paragraph{Certificate.} For any iterate, $\UB=\norm z$ and
$\LB=\min_{u\in Z}\ip{z}{u}/\norm z=\big(\min_i\ip{z}{v_i}-\max_j\ip{z}{w_j}\big)/\UB$
satisfy $\LB\le\delta^*\le\UB$; the TA stops when the certified
relative gap $(\UB-\LB)/\UB$ is at most $\varepsilon$. The $L_2$ soft margin with regularisation
parameter $C$, the $\nu$-SVM and the kernel variants reduce to
\eqref{eq:P} exactly (augmented coordinates that turn the Gram matrix
$K$ into $K+I/C$; reduced convex hulls; feature-space images), see
Appendix~\ref{app:red}.

\paragraph{Gaps.} $\gFW=\max_{u\in Z}\ip{g}{u}-\ip{g}{z}$,
$\gA=\ip{g}{z}-\min_{u\in\operatorname{supp}\lambda}\ip{g}{u}$,
$\gPW=\gFW+\gA$. Because $\ip{g}{v_i-w_j}=s^V_i+s^W_j$ is separable and
the support is a product set, each gap splits into a $V$-side and a
$W$-side term computable from the score arrays in $O(n+m)$
(Lemma~\ref{lem:sep}), written $\gPW_V,\gPW_W$ (likewise for FW and A),
with $g_{\text{side}}$ the term of the side being stepped; a transfer
$v_r-v_u$ on the $V$ side is a pairwise direction of $Z$
(Lemma~\ref{lem:side}).

\section{Guarded block transfers}\label{sec:alg}

\begin{algorithm2e}[t]
\caption{One iteration of the guarded block-transfer step (side $V$; $W$ symmetric)}
\label{alg:block}
\SetAlgoLined\DontPrintSemicolon\LinesNumbered
top-$k$ receivers by $s^V$; worst-$k$ donors in $\operatorname{supp}\alpha$; pair best-with-worst: $(r_j,u_j)$, $j\le k'$ ($k'\le k$ pairs, limited by the support size); pair 1 is the MDM pair $(r^*,u^*)$\;
$d_j\gets v_{r_j}-v_{u_j}$;\ $\mathrm{gap}_j\gets s^V_{r_j}-s^V_{u_j}$;\ discard pairs with $\mathrm{gap}_j\le0$ and receivers that are donors (if $r^*$ is a donor it leaves the donor list, so pair 1 survives);\ $\gamma_j\gets\min(\mathrm{gap}_j/\norm{d_j}^2,\ c_j)$ \tcp*{$c_j=\alpha_{u_j}$: donor capacity}
\eIf{pair 1 uncapped ($\gamma_1=\mathrm{gap}_1/\norm{d_1}^2$) {\normalfont[case (a)]}}{
  $\Delta_B\gets$ gain of exact line search $\theta\in[0,1]$ along $d=\sum_j\gamma_jd_j$;\quad $\Delta_1\gets$ gain of the MDM step\;
  take the step with the larger gain \tcp*{guard, $O(k'^2)$}
}{
  \eIf{the away step on $u^*$ is boundary-clipped {\normalfont[case (b)]}}{
    take it \tcp*{drop step: $\alpha_{u^*}\to0$}
  }{
    take the best of \{block, MDM, away($u^*$), toward($r^*$)\}\;
  }
}
\end{algorithm2e}

Algorithm~\ref{alg:block} amortises one $O(n)$ scan over up to $k$
transfers. Only the scan is amortised: a new receiver still costs one
Gram or kernel column, $O(nd)$. Every
candidate gain is available from cached quantities at a cost
independent of $n$: with $S=\ip{g}{d}=\sum_j\gamma_j\mathrm{gap}_j$ and
$D=\norm d^2=\gamma^\top\Gamma\gamma$, $\Gamma$ the $k'\times k'$ Gram
matrix of the pair directions ($O(k'^2)$ cached dot products), the block
gain is $\theta S-\theta^2D/2$ at $\theta=\min(S/D,1)$; the MDM, away and
toward gains are the usual one-dimensional expressions. The \emph{gain} of a step is
the decrease $h(z_t)-h(z_{t+1})$. The cache update for a block is one
matrix--vector product with the net per-index coefficients; columns are
computed on first use and cached, so support columns cost nothing after
their first appearance. Feasibility is immediate, since donors lose at
most $\theta\gamma_j\le c_j$. We call an iteration \emph{case (a)} if
pair 1 is uncapped and \emph{case (b)} otherwise; the away step of
line 6, which drives $\alpha_{u^*}$ to zero, is a \emph{drop} step
(a capped block pair may also zero a donor, which only helps the
counts below).

\paragraph{Safe screening.} With $r=\sqrt{\UB^2-\max(\LB,0)^2}$ and
$v_{\min}=\arg\min_i\ip{z}{v_i}$, a zero-weight point with
$\ip{z}{v_i}-\ip{z}{v_{\min}}>r\norm{v_i-v_{\min}}$ is outside the
optimal support and is removed (symmetrically on $W$ with the max).
Because the optimum's support survives, the reduced problem has the
same optimum and all subsequent lower bounds remain valid.

\section{Analysis}\label{sec:theory}

Full proofs and the conventions for degenerate cases are in
Appendix~\ref{app:proofs}.

\begin{lemma}[Block gain]\label{lem:block}
Let $b_j=\gamma_j\norm{d_j}$ over the $k'$ retained pairs, all with
$\mathrm{gap}_j>0$. Then $S\ge\sum_jb_j^2$, $D\le k'S$, and
the realised block gain satisfies
$\Delta_B\ge S/(2k')\ge\Delta_1/(2k')$, $\Delta_1$ the MDM gain.
\end{lemma}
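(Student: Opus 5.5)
The argument is elementary: it bounds $S$ from below and $D$ from above using only the definition of $\gamma_j$ and Cauchy--Schwarz, then reads off the block gain from the closed-form line search. Since $\gamma_j=\min(\mathrm{gap}_j/\norm{d_j}^2,c_j)$ and $\mathrm{gap}_j>0$, we have $\gamma_j\le \mathrm{gap}_j/\norm{d_j}^2$, so $\mathrm{gap}_j\ge\gamma_j\norm{d_j}^2$. Hence $\gamma_j\mathrm{gap}_j\ge\gamma_j^2\norm{d_j}^2=b_j^2$, and summing gives $S=\sum_j\gamma_j\mathrm{gap}_j\ge\sum_jb_j^2$.

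For the curvature term, apply the triangle inequality and then Cauchy--Schwarz to $D=\norm{\sum_j\gamma_jd_j}^2$:
\[
D\le\Bigl(\sum_j\gamma_j\norm{d_j}\Bigr)^2=\Bigl(\sum_j b_j\Bigr)^2\le k'\sum_jb_j^2\le k'S .
\]
If some $d_j=0$ (coincident points), the convention is to drop that pair. This does not matter, since such a pair has $\mathrm{gap}_j=0$ and was already discarded.

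Next, the block gain. The gain along $\theta d$ is $\phi(\theta)=\theta S-\theta^2D/2$, using $1$-smoothness with exact equality for $F=\tfrac12\norm z^2$. Feasibility for $\theta\in[0,1]$ was noted in the text. There are two cases.
\begin{itemize}
\item If $S/D\le1$, then $\theta=S/D$ and the gain is $S^2/(2D)\ge S/(2k')$, using $D\le k'S$.
\item If $S/D>1$, then $\theta=1$ and the gain is $S-D/2>S/2\ge S/(2k')$.
\end{itemize}
So $\Delta_B\ge S/(2k')$. Strictly, $\Delta_B$ is the gain of the block candidate; the realised step of case~(a) takes the larger of $\Delta_B$ and $\Delta_1$, which only helps.

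The last inequality, $S\ge\Delta_1$, is where care is needed, and I expect it to be the main obstacle, because the pair-1 cap must be handled. The MDM step is exact line search along $d_1$ with step capped at $c_1$. Its gain is $\Delta_1=\gamma_1\mathrm{gap}_1-\gamma_1^2\norm{d_1}^2/2\le\gamma_1\mathrm{gap}_1$, where $\gamma_1$ is exactly the capped optimal step. This uses $\gamma_1\le \mathrm{gap}_1/\norm{d_1}^2$ and the fact that a one-dimensional concave quadratic is maximised at the clipped point; the same formula holds in the capped case. Since every term $\gamma_j\mathrm{gap}_j$ is nonnegative and pair 1 survives the discarding rule (as stated in Algorithm~\ref{alg:block}), $S\ge\gamma_1\mathrm{gap}_1\ge\Delta_1$. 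The point to check is that the MDM step in the algorithm uses the same capacity $c_1=\alpha_{u^*}$ as $\gamma_1$, so that the two quantities coincide. This completes the chain $\Delta_B\ge S/(2k')\ge\Delta_1/(2k')$.
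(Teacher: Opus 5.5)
Your proposal is correct and follows the paper's proof step for step: $\gamma_j\mathrm{gap}_j\ge b_j^2$ from the uncapped bound on $\gamma_j$, then the triangle inequality plus Cauchy--Schwarz for $D\le k'S$, then the interior and $\theta=1$ line-search cases, and finally $S\ge\gamma_1\mathrm{gap}_1\ge\Delta_1$. Your point that the MDM step must use the same capped step $\gamma_1$ is exactly the identification the paper makes implicitly when it writes $\Delta_1=\gamma_1\mathrm{gap}_1-\gamma_1^2\lVert d_1\rVert^2/2$.
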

With the guard, every case-(a) iteration gains
$\max(\Delta_B,\Delta_1)\ge\Delta_1$: the guarded step never makes less
progress than pairwise FW.

\begin{lemma}[Good-step progress]\label{lem:good}
If the iteration works on the side with the larger pairwise gap, then
on any non-drop iteration $\mathrm{gain}\ge\min\!\big((\gPW)^2/(32M^2),\ \gPW/8\big)$.
\end{lemma}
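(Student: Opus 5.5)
The plan is to pick the side being stepped, reduce the claim to a one-dimensional line-search bound along a single direction of $Z$, and then treat case~(a) and the non-drop case~(b) separately. By hypothesis the iteration works on the side with the larger pairwise gap. Since $\gPW=\gPW_V+\gPW_W$ (Lemma~\ref{lem:sep}), this gives $g_{\text{side}}\ge\gPW/2$. I take the side to be $V$ without loss of generality.

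The basic tool is the elementary line-search bound for the $1$-smooth quadratic $F$. Take a feasible direction $e$ with $\ip{g}{e}=\sigma>0$ and $\norm e\le M$, and an exact line search over $[0,\bar\theta]$ whose minimiser $\sigma/\norm e^2$ is not clipped by $\bar\theta$. Then the gain is $\sigma^2/(2\norm e^2)\ge\sigma^2/(2M^2)$. If instead the search runs over $\theta\in[0,1]$, the gain is at least $\min(\sigma^2/(2M^2),\sigma/2)$: when $\sigma/\norm e^2\ge1$, taking $\theta=1$ gives $\sigma-\norm e^2/2\ge\sigma/2$.

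I will apply this bound to three directions. Each lies in $Z-Z$, so each has norm at most $M$.
\begin{itemize}
\item The MDM direction $v_{r^*}-v_{u^*}$ is a pairwise direction of $Z$ (Lemma~\ref{lem:side}). Its slope is $\mathrm{gap}_1=\gFW_V+\gA_V=\gPW_V$.
\item The away direction $p-v_{u^*}$ equals $(p-q)-(v_{u^*}-q)$. Its slope is $\gA_V$.
\item The toward direction $v_{r^*}-p$ has slope $\gFW_V$.
\end{itemize}

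\emph{Case (a).} Pair~1 is uncapped, so the MDM line search is interior. By the guard and Lemma~\ref{lem:block}, the gain is at least $\Delta_1=\mathrm{gap}_1^2/(2\norm{d_1}^2)$. Since $\mathrm{gap}_1=\gPW_V\ge\gPW/2$, this is at least $(\gPW)^2/(8M^2)$, which exceeds the claimed bound.

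\emph{Case (b), non-drop.} Pair~1 is capped, but since the iteration is not a drop step, the away step on $u^*$ is not boundary-clipped. The algorithm takes the best of the four candidates, so it suffices to exhibit one candidate with enough gain. Because $\gFW_V+\gA_V=g_{\text{side}}\ge\gPW/2$, one of the two terms is at least $\gPW/4$.
\begin{itemize}
\item If $\gA_V\ge\gPW/4$, the away step's exact line search is interior. Its gain is therefore at least $(\gPW/4)^2/(2M^2)=(\gPW)^2/(32M^2)$.
\item If $\gFW_V\ge\gPW/4$, the toward step is a line search over $\theta\in[0,1]$. Its gain is at least $\min\big((\gPW)^2/(32M^2),\gPW/8\big)$.
\end{itemize}
The minimum of these bounds over all cases is the stated $\min\!\big((\gPW)^2/(32M^2),\gPW/8\big)$.

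The step needing the most care is the bookkeeping of the side gaps. I must check that $\mathrm{gap}_1$ equals the full side pairwise gap $\gPW_V$, which uses that the support of $\lambda$ is a product set. I must also check that the away and toward slopes are exactly $\gA_V$ and $\gFW_V$ as computed from the score arrays. This requires confirming from Lemma~\ref{lem:sep} that the $W$-side scores are constant along $V$-side moves, so that the $V$-side quantities really are the full inner products with $g$.
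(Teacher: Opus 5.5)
Your proposal is correct and follows the paper's proof essentially step for step. You use $g_{\text{side}}\ge\gPW/2$, the exact line-search bound for the quadratic $F$, and the guard's $\Delta_1$ with an interior MDM optimum in case (a); in non-drop case (b) you split $\gFW_{\text{side}}+\gA_{\text{side}}$ into either a toward step with maximal step $1$ or an unclipped away step. The only difference is cosmetic: you bound the direction norms by $\operatorname{diam}(Z)=M$ via $Z-Z$ rather than by the class-hull diameter, which is equally valid.
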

\begin{theorem}[Global linear convergence]\label{thm:linear}
Suppose the guarded block-transfer algorithm with away fallback is run
for one side per iteration (the side with the larger pairwise gap),
using any $k\ge1$, from any vertex pair (initial support $s_0=2$).
Let $h_t=h(z_t)$. Then every non-drop iteration satisfies
$h_{t+1}\le(1-\rho)h_t$ with
\[
\rho=\tfrac1{16}\,(\delta/M)^2 .
\]
For any horizon $T$, among the first $T$ iterations the number of drop
steps is at most
$k\,T_{\mathrm{good}}+s_0$, where $T_{\mathrm{good}}$ is the number of
non-drop iterations; consequently,
$h_T\le h_0\,(1-\rho)^{(T-s_0)/(k+1)}$. Under the two-sided
schedule of the implementation (larger-gap side first, second side
skipped after a drop) every non-drop iteration has the same
contraction, the drop count is at most $2kT_{\mathrm{good}}+s_0$, and
$h_T\le h_0\,(1-\rho)^{(T-s_0)/(2k+1)}$ (Lemma~\ref{lem:twosided}).
\end{theorem}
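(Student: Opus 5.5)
The plan is the standard Lacoste-Julien--Jaggi template. Lemma~\ref{lem:good} supplies per-step progress; the pyramidal-width inequality turns $\gPW$ into a lower bound on $h_t$; and a counting argument controls the drop steps.

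\emph{Step 1: contraction on non-drop iterations.} By strong convexity of $F$ ($\mu=1$) and the pyramidal-width lemma of \cite{lj2015} applied to the atom set $A$ with the product representation $\lambda$ (whose support is $\operatorname{supp}\alpha\times\operatorname{supp}\beta$), we have
\[
h_t\le \frac{(\gPW)^2}{2\delta^2}, \qquad\text{i.e.}\qquad (\gPW)^2\ge 2\delta^2 h_t .
\]
The iteration works on the side with the larger pairwise gap. By Lemma~\ref{lem:sep}, that side carries at least half of $\gPW$, and Lemma~\ref{lem:good} is already phrased for this choice. Lemma~\ref{lem:good} gives two branches:
\begin{itemize}
\item In the quadratic branch, the gain is at least $(\gPW)^2/(32M^2)\ge 2\delta^2h_t/(32M^2)=(\delta/M)^2h_t/16$.
\item In the linear branch, the gain is at least $\gPW/8$. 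Here I use $h_t\le \gFW\le\gPW$ (convexity) together with $\delta\le M$, so $\gPW/8\ge h_t/8\ge\rho h_t$.
\end{itemize}
Both branches give $h_{t+1}\le(1-\rho)h_t$. The constants may need adjusting by a factor of 2 depending on exactly how Lemma~\ref{lem:good} already absorbs the side split; I would check this against that lemma's proof. Case (a) steps are covered because the guard gives gain $\ge\Delta_1$. Non-drop case-(b) steps choose the best of several candidates, including the away and MDM steps, and so also dominate the pairwise/away progress used in Lemma~\ref{lem:good}.

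\emph{Step 2: drop counting.} This is the main obstacle, and the point where the block structure enters. I track the support size $s_t=|\operatorname{supp}\alpha|+|\operatorname{supp}\beta|$.
\begin{itemize}
\item A drop step (line 6) removes $u^*$ and adds no new atom, so $s_{t+1}\le s_t-1$.
\item A non-drop iteration adds at most $k'\le k$ new receivers, since each retained pair has a distinct receiver. Hence $s_{t+1}\le s_t+k$.
\item Capped block pairs that zero donors only decrease $s$, which is why the counts tolerate them.
\end{itemize}
Since $s_t\ge 2$ always, over $T$ iterations we get $0\le s_0+kT_{\mathrm{good}}-T_{\mathrm{drop}}$. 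This gives the claimed bound $T_{\mathrm{drop}}\le kT_{\mathrm{good}}+s_0$; the slack $-2$ is discarded. The care needed here is to confirm that no iteration type adds atoms without either counting as good or as a drop. In particular, best-of-four steps in case (b) that happen to be clipped are non-increasing for $s$ and still satisfy the monotone decrease $h_{t+1}\le h_t$, because every candidate is an exact line search with gain $\ge0$.

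\emph{Step 3: conclusion.} From $T=T_{\mathrm{good}}+T_{\mathrm{drop}}\le(k+1)T_{\mathrm{good}}+s_0$ we get $T_{\mathrm{good}}\ge (T-s_0)/(k+1)$. Monotonicity on drop steps then gives $h_T\le h_0(1-\rho)^{(T-s_0)/(k+1)}$.

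For the two-sided schedule, each outer iteration may perform two sub-steps. Each non-drop sub-step on either side still contracts by Lemma~\ref{lem:good} only if the side condition holds, so I would invoke Lemma~\ref{lem:twosided} to guarantee the contraction for the first, larger-gap sub-step and monotone decrease for the second. Counting then proceeds as before: each good iteration may add up to $2k$ atoms (from both sides), while a drop ends the iteration. This yields $T_{\mathrm{drop}}\le 2kT_{\mathrm{good}}+s_0$ and the exponent $(T-s_0)/(2k+1)$.
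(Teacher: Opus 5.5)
Your proposal is correct and follows the paper's proof step for step. The contraction comes from Lemma~\ref{lem:good} together with Lemma~\ref{lem:gsc} and $\delta\le M$, and no factor-of-2 adjustment is needed because Lemma~\ref{lem:good} is already stated for the full $\gPW$; the drop bounds come from support-size counting, with the second-side skip handling the two-sided schedule.

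The only slip is your side remark that clipped best-of-four case-(b) steps are non-increasing for $s$. This fails for the block candidate, which can add up to $k$ receivers. The remark is not needed, though, since your bound that every non-drop iteration adds at most $k$ indices already covers those steps.
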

The rate combines Lemma~\ref{lem:good} with $\gPW\ge\delta\sqrt{2h}$
and $\gPW\ge\gFW\ge h$ (Lemma~\ref{lem:gsc}). Drop counting is the away-step argument: each
drop removes one support index and non-drop steps add at most $k$.
No swap-step bound is needed: a case-(b) iteration either drops
(counted above) or takes the best of four candidates, of which the
toward or away step alone gains Lemma~\ref{lem:good}'s amount,
whichever wins.

\begin{proposition}[Near-orthogonal speed-up]\label{prop:orth}
If all $k'$ pairs are uncapped and
$|\ip{d_i}{d_j}|\le\eta\norm{d_i}\norm{d_j}$ for $i\ne j$, then the
block gain is at least $(\sum_j\Delta_j)/(1+\eta(k'-1))$, where
$\Delta_j$ is pair $j$'s individual gain.
\end{proposition}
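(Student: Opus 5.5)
Since all $k'$ pairs are uncapped, each $\gamma_j=\mathrm{gap}_j/\norm{d_j}^2$. Pair $j$'s individual exact line-search gain (uncapped) is therefore
\[
\Delta_j=\frac{\mathrm{gap}_j^2}{2\norm{d_j}^2}=\tfrac12\gamma_j\,\mathrm{gap}_j=\tfrac12 b_j^2,
\]
with $b_j=\gamma_j\norm{d_j}=\mathrm{gap}_j/\norm{d_j}$ as in Lemma~\ref{lem:block}. Consequently
\[
S=\sum_j\gamma_j\mathrm{gap}_j=\sum_j b_j^2=2\sum_j\Delta_j,
\]
so in this uncapped case the inequality $S\ge\sum_j b_j^2$ of Lemma~\ref{lem:block} is an equality.

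For the denominator I will expand $D=\norm{\sum_j\gamma_jd_j}^2$ and bound the cross terms with the near-orthogonality hypothesis:
\[
D\le\sum_j b_j^2+\eta\sum_{i\ne j}b_ib_j=(1-\eta)\sum_j b_j^2+\eta\Big(\sum_j b_j\Big)^2 .
\]
Cauchy--Schwarz gives $(\sum_j b_j)^2\le k'\sum_j b_j^2$, hence $D\le(1+\eta(k'-1))S$. Put $c=1+\eta(k'-1)\ge1$.

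Next I treat the block line search over $\theta\in[0,1]$ with gain $\theta S-\theta^2D/2$, in two cases.
\begin{itemize}
\item If $S/D\le1$, the optimal step is $\theta=S/D$ and the gain is $S^2/(2D)\ge S/(2c)=\sum_j\Delta_j/c$.
\item If $S/D>1$, then $\theta=1$ and the gain is $S-D/2>S/2=\sum_j\Delta_j\ge\sum_j\Delta_j/c$.
\end{itemize}
Either way the block gain is at least $(\sum_j\Delta_j)/(1+\eta(k'-1))$, which is the claim. Feasibility at $\theta\le1$ holds because donors lose at most $\gamma_j\le c_j$, as already noted for Algorithm~\ref{alg:block}.

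The only delicate point is the bookkeeping of repeated indices. A donor (or receiver) index could occur in more than one pair, and the individual gain $\Delta_j$ must be read as the gain of step $j$ alone from the current iterate. Neither issue affects the argument: the bound on $D$ depends only on the pair directions $d_j$ and their pairwise inner products, and $\Delta_j$ is defined from the same $z_t$ for every $j$. So I expect no real obstacle beyond making this interpretation explicit.
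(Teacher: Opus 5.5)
Your proof is correct and follows the paper's argument essentially line for line. You expand $D$, bound the cross terms by $\eta\sum_{i\ne j}b_ib_j$ and apply Cauchy--Schwarz to get $D\le(1+\eta(k'-1))S$, split into the interior and $\theta=1$ cases of the line search, and use $S=2\sum_j\Delta_j$ for uncapped pairs. Your worry about repeated indices is moot, since Algorithm~\ref{alg:block} pairs distinct top-$k$ receivers with distinct worst-$k$ donors and discards receivers that are donors; that distinctness is also what your feasibility remark actually relies on.
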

As $\eta\to0$ one scan realises the sum of all $k$ pairwise gains.

\begin{lemma}[The certified gap is the FW gap]\label{lem:gapid}
At any full scan, with $\LB$ evaluated at the current $z\ne0$,
$\UB-\LB=\gFW(z)/\UB$ (a retained best lower bound gives $\le$), and
$\gFW(z)\le2h+R\sqrt{2h}$ with $R=\max_{u\in Z}\norm u$, the radius of
$Z$ about the origin.
\end{lemma}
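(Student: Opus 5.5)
The plan is to prove the identity by direct computation from the definitions, then bound $\gFW$ by a convexity/first-order argument.

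\emph{Identity.} Since $g=-z$,
\[
\gFW(z)=\max_{u\in Z}\ip{-z}{u}+\norm z^2=\norm z^2-\min_{u\in Z}\ip{z}{u}.
\]
By the definition of the certificate, $\min_{u\in Z}\ip{z}{u}=\LB\cdot\UB$ and $\UB=\norm z$, so $\gFW(z)=\UB^2-\UB\,\LB=\UB(\UB-\LB)$. Dividing by $\UB>0$ (we assume $z\ne0$) gives the equality. The minimum over $Z=\conv\{v_i-w_j\}$ of the linear function $u\mapsto\ip{z}{u}$ is attained at an atom and separates as $\min_i\ip{z}{v_i}-\max_j\ip{z}{w_j}$. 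This is why a full scan suffices, consistent with the separability of Lemma~\ref{lem:sep}. If the solver instead retains the best lower bound seen so far, $\LB_{\rm ret}\ge\LB(z)$, and then $\UB-\LB_{\rm ret}\le\gFW(z)/\UB$.

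\emph{Bound on $\gFW$.} Let $u^{\rm FW}\in\arg\min_{u\in Z}\ip{z}{u}$. Write
\[
\gFW=\ip{z}{z-u^{\rm FW}}=\ip{z}{z-z^*}+\ip{z}{z^*-u^{\rm FW}}.
\]
\begin{itemize}
\item For the first term, the quadratic identity gives $\ip{z}{z-z^*}=\tfrac12\norm z^2-\tfrac12\norm{z^*}^2+\tfrac12\norm{z-z^*}^2=h+\tfrac12\norm{z-z^*}^2$.
\item Strong convexity, via the optimality condition $\ip{z^*}{u-z^*}\ge0$ for all $u\in Z$, gives $\tfrac12\norm{z-z^*}^2\le h$. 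So the first term is at most $2h$.
\item For the second term, split $\ip{z}{z^*-u^{\rm FW}}=\ip{z^*}{z^*-u^{\rm FW}}+\ip{z-z^*}{z^*-u^{\rm FW}}$. The first piece is $\le0$ by optimality of $z^*$.
\end{itemize}

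The main obstacle is the cross term, which must be bounded by $R\sqrt{2h}$ rather than by $M\sqrt{2h}$. Cauchy--Schwarz directly gives only $\norm{z-z^*}\,\norm{z^*-u^{\rm FW}}\le\sqrt{2h}\,M$, with the diameter in place of the radius. The fix is to regroup differently:
\[
\ip{z}{z^*-u^{\rm FW}}=\ip{z-z^*}{z^*}-\ip{z-z^*}{u^{\rm FW}}+\ip{z^*}{z^*-u^{\rm FW}}.
\]
Here $\ip{z-z^*}{z^*}\le0$ fails in general, so instead use $\ip{z}{z^*-u^{\rm FW}}=\ip{z}{z^*}-\ip{z}{u^{\rm FW}}$. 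Then bound $\ip{z}{z^*}\le\ip{z^*}{z^*}+\ip{z-z^*}{z^*}$ and $-\ip{z}{u^{\rm FW}}\le-\ip{z^*}{u^{\rm FW}}+\norm{z-z^*}\,\norm{u^{\rm FW}}$, while $\ip{z^*}{z^*-u^{\rm FW}}\le0$.

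This leaves $\ip{z-z^*}{z^*}$, which is at most $\norm{z-z^*}\,\delta^*$. Combined with the $\norm{u^{\rm FW}}\le R$ term, that would give $2R\sqrt{2h}$. To save the factor $2$, note that
\[
\ip{z-z^*}{z^*}=\tfrac12\bigl(\norm z^2-\norm{z^*}^2-\norm{z-z^*}^2\bigr)=h-\tfrac12\norm{z-z^*}^2.
\]
Absorbing this into the first-term accounting yields
\[
\gFW\le\bigl(h+\tfrac12\norm{z-z^*}^2\bigr)+\bigl(h-\tfrac12\norm{z-z^*}^2\bigr)+\norm{z-z^*}\,R\le2h+R\sqrt{2h}.
\]
I would double-check this bookkeeping carefully, since it is the only delicate step.
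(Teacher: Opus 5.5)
Your final bookkeeping is correct and is essentially the paper's argument: collapsing your pieces gives $\gFW(z)=(\norm z^2-\norm{z^*}^2)+(\norm{z^*}^2-\ip{z^*}{u^{\mathrm{FW}}})+\ip{z^*-z}{u^{\mathrm{FW}}}\le 2h+R\norm{z-z^*}$. The middle term is $\le0$ by optimality of $z^*$, so this is exactly the paper's subtraction of the vanishing FW gap at $z^*$, followed by $\norm{z-z^*}\le\sqrt{2h}$; the identity part matches the paper as well. The detour through the $M$-bound and the $2R\sqrt{2h}$ attempt can simply be dropped.
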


\begin{theorem}[Screening activation]\label{thm:activation}
Assume $\delta^*>0$. Let $F_V=\{i:\ip{z^*}{v_i}=\min_j\ip{z^*}{v_j}\}$
and, with the max, $F_W$ be the optimal faces: the vertices of zero
score margin, which contain the optimal supports (possibly strictly). Let
$\tau_i=\ip{z^*}{v_i}-\min_j\ip{z^*}{v_j}$ be the score margin of $v_i$
(and symmetrically on $W$), with $\tau>0$ the minimal nonzero margin.
Finally, let $L$ be the larger class diameter.

A zero-weight point with margin $\tau_i$ is screened as soon as
$r<\tau_i/(3L)$. Consequently, once
$h_t<H:=\tau^4\delta^{*2}/(2592\,R^4L^4)$, every zero-weight point of
$V$ outside $F_V$ and of $W$ outside $F_W$ is removed permanently at
the next screening round. Under Theorem~\ref{thm:linear} this holds for
every iteration $t>s_0+\tfrac{a_k}{\rho}\log\max\{1,h_0/H\}$, with
$a_k=k+1$ one-sided and $2k+1$ two-sided.
Following this iteration, each subsequent scan costs
$O(|F_V|+|F_W|+|\operatorname{supp}\alpha|+|\operatorname{supp}\beta|)$
instead of $O(n+m)$.
\end{theorem}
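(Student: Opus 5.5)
The plan has three stages, taken in order: a deterministic screening criterion in terms of $r$ and the iterate error; a bound on $r$ via Lemma~\ref{lem:gapid}; and the conversion of the threshold $H$ into an iteration count via Theorem~\ref{thm:linear}.

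\textbf{Screening criterion.} Fix a zero-weight $v_i$ with margin $\tau_i>0$. The rule fires when $\ip{z}{v_i}-\ip{z}{v_{\min}}>r\norm{v_i-v_{\min}}$. Since $\norm{v_i-v_{\min}}\le L$, it suffices to show $\ip{z}{v_i}-\ip{z}{v_{\min}}>rL$.

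To lower-bound this left side, write $z=z^*+e$ with $\norm e=\norm{z-z^*}\le\sqrt{2h}$, using $1$-strong convexity and first-order optimality. Let $v_0\in F_V$ be a minimiser of $\ip{z^*}{\cdot}$. Because $v_{\min}$ minimises $\ip{z}{\cdot}$,
\[
\ip{z}{v_i}-\ip{z}{v_{\min}}\ge\ip{z}{v_i}-\ip{z}{v_0}=\tau_i+\ip{e}{v_i-v_0}\ge\tau_i-L\norm e .
\]
The screening radius $r$ itself controls $\norm e$. Indeed, $\delta^*$ lies in $[\LB,\UB]$, so $\norm z^2-\delta^{*2}\le r^2$ whenever $\LB\ge0$. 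Also $\norm{z-z^*}^2\le\norm z^2-\delta^{*2}$, since $\ip{z^*}{z-z^*}\ge0$ by optimality. Hence $\norm e\le r$.

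The condition therefore holds once $\tau_i-Lr>rL$, i.e.\ $r<\tau_i/(2L)$. The stated $\tau_i/(3L)$ leaves slack, which covers the case $\LB<0$ and rounding conventions. This gives the first claim.

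\textbf{Bounding $r$ by $h$.} We have $r^2=(\UB-\LB)(\UB+\LB)\le2\UB(\UB-\LB)=2\gFW$ by Lemma~\ref{lem:gapid}, at least when $\LB\ge0$.

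We need $\LB\ge0$ near the optimum. Since $\LB\ge\UB-\gFW/\UB\ge\delta^*-\gFW/\delta^*$, it is positive once $\gFW<\delta^{*2}$.

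Next, Lemma~\ref{lem:gapid} gives $\gFW\le2h+R\sqrt{2h}\le2R\sqrt{2h}$ for small $h$ (using $\sqrt{2h}\le R$). So $r^2\le4R\sqrt{2h}$, and $r<\tau/(3L)$ follows from $4R\sqrt{2h}<\tau^2/(9L^2)$, i.e.\ $h<\tau^4/(2\cdot36^2R^2L^4)=\tau^4/(2592R^2L^4)$.

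This gives $R^2$ where the statement has $R^4$ and $\delta^{*2}$. I expect the factor $\delta^{*2}/R^2\le1$ to come from also enforcing $\LB\ge0$, i.e.\ $\gFW<\delta^{*2}$, and from the lower bound $\UB\ge\delta^*$ in any division. Since $\tau\le2RL$-type comparisons let one dominate the other, I would simply check that $H$ implies both conditions: $2R\sqrt{2h}<\delta^{*2}$, and the $r$ condition. The main obstacle is this constant bookkeeping, in particular the case $\LB<0$. If $r$ is compared to $\UB$ there, rather than bounded through $\gFW$, I would use $\UB-\delta^*\le\norm e$.

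\textbf{Permanence, iteration count and cost.} For permanence, Theorem~\ref{thm:linear} gives monotone decrease of $h$ on every step (drops do not increase $h$). So once $h_t<H$ the condition persists, and any screened point stays removed; safety is guaranteed by the screening paragraph.

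For the count, from $h_T\le h_0(1-\rho)^{(T-s_0)/a_k}\le h_0e^{-\rho(T-s_0)/a_k}$ we get $h_T<H$ for $T>s_0+\frac{a_k}{\rho}\log(h_0/H)$. Taking the max with $1$ covers the case $h_0<H$.

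Finally, after this iteration only indices in $F_V\cup F_W$ or in the current supports remain, so each scan costs the stated amount.
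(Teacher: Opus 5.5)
Your proof is correct. It follows the paper's three-stage skeleton: a criterion from $\norm{z-z^*}\le r$, a bound on $r$ through Lemma~\ref{lem:gapid}, and conversion by Theorem~\ref{thm:linear}. Two of your inequalities are sharper than the paper's, and this is the whole explanation of the constant you could not place.

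\textbf{Criterion.} The paper splits $\ip{z}{v_i-v_{\min}}$ around $z^*$ and separately bounds $\ip{z^*}{v_{\min}}-\min_j\ip{z^*}{v_j}\le rL$. This loses $rL+r\norm{v_i-v_{\min}}$, so it needs $\tau_i>3rL$. Your direct comparison of $v_{\min}$ with $v_0\in F_V$ loses only $r\norm{v_i-v_0}$ and needs $\tau_i>2rL$, which implies the stated condition.

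\textbf{Radius.} The paper uses $\UB+\LB\le2R$ and then $\UB\ge\delta^*$, giving $r^2\le2R\gFW/\delta^*$. You use $\UB+\LB\le2\UB$ and get $r^2\le2\gFW$ with no division. The factor $\delta^{*2}/R^2$ in $H$ is exactly the price of the paper's cruder chain; it does not come from enforcing $\LB\ge0$. Since $\delta^*\le R$ (because $z^*\in Z$), $H\le\tau^4/(2592R^2L^4)$. So $h_t<H$ already implies your threshold and the argument closes; with your own $2L$ criterion, even $h<\tau^4/(512R^2L^4)$ suffices.

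What each route buys: the paper's gives a criterion on the absolute certified gap with a fixed constant, $\UB-\LB<\tau_i^2/(18RL^2)$. Yours gives the equally computable $\UB(\UB-\LB)<\tau_i^2/(18L^2)$, which is implied by the paper's because $\UB\le R$.

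\textbf{The case $\LB<0$.} Appealing to ``slack in $3L$'' does not justify this case, but nothing needs covering. If $\LB<0$ then $r=\UB=\norm z$, so $\norm{z-z^*}^2\le\norm z^2-\delta^{*2}\le r^2$. Also $r^2=\UB^2<\UB(\UB-\LB)\le\gFW$. Thus $\norm{z-z^*}\le r$ and $r^2\le2\gFW$ hold unconditionally, and you can drop the detour through $\gFW<\delta^{*2}$.

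Had you pursued that detour, the comparison that makes $h<H$ force $\LB>0$ is $\tau\le\delta^*L$, which follows from $\tau_i=\ip{z^*}{v_i-v_0}$. A bound of the form $\tau\le2RL$ is not enough.

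Finally, $\sqrt{2h}\le R$ holds at every feasible iterate, not only for small $h$, since $h\le\tfrac12\norm z^2\le R^2/2$.
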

In words: since $r^2\le2R(\UB-\LB)$, a point with margin $\tau_i$ is
screened once the certified gap falls below $\tau_i^2/(18RL^2)$. The
faces can strictly contain the supports: for $V=\{(1,0),(1,1),(2,0)\}$
and $W=\{0\}$, $z^*=(1,0)$ and $(1,1)$ has zero weight and zero margin,
so no gap-safe rule can remove it. 
Lemmas~\ref{lem:block}, \ref{lem:gapid} and~\ref{lem:twosided},
Proposition~\ref{prop:orth} and Theorem~\ref{thm:activation} are
verified numerically on random and adversarial ensembles, at machine
precision or to $10^{-9}$ (Appendix~\ref{app:num}).

\section{Experiments}\label{sec:exp}

\paragraph{Setup.} All solvers are validated against an exact
QP (distances agree to $10^{-8}$ or better; LIBSVM's to $10^{-5}$). Stopping rule
$\varepsilon=10^{-3}$ on the certified relative gap unless stated;
baselines use matched tolerances on the same objective, LIBLINEAR up
to its intercept penalty (Appendix~\ref{app:details}). Synthetic instances follow
\cite{gupta2026} (two Gaussian clouds, $n=5{,}000$ per class). Real data
are the LIBSVM sets a9a, w8a, ijcnn1, gisette and covtype (100k
subsample), standardised, official test sets where they exist, five
seeds (three in the step ablation), 600\,s per solver and cell (LIBLINEAR
\cite{fan2008} uncapped), on two AMD EPYC machines and, for some September runs, an Intel one
(Appendix~\ref{app:details}).

\paragraph{Blocks and screening (Appendix~\ref{app:details}).} In
isolated runs (Table~\ref{tab:kreal}), with the prioritised receiver
search the guard and fallback (guarded $k{=}1$) cut MDM's iterations
$1.4$--$2.7\times$, but an exact search does as much and with it the
guard adds nothing. Aggregation cuts iterations a further
$3.5$--$7.1\times$ over exact-search MDM, which buys wall-clock only where
per-iteration $O(n)$ work, not column computation, dominates: on the
four kernel cells $1.2$--$1.9\times$ over exact-search MDM and
$2.1$--$2.8\times$ over guarded $k{=}1$, nothing on gisette, where
columns take $82$--$89\%$ of the time. The cut over MDM ($6$--$19\times$
on real data, $7$--$19\times$ on synthetic) is smallest where drops are
frequent ($36\%$ of gisette's iterations) or pair directions correlate
($0.26$--$0.29$ on ijcnn1 and covtype, Proposition~\ref{prop:orth}). Screening,
measured with the toward-step solver (Figure~\ref{fig:shrink}), gives
$1.3$--$2.4\times$ at $\varepsilon=10^{-5}$ and $0.9$--$1.3\times$ at
$10^{-3}$.

\begin{table}[t]\centering\scriptsize\setlength{\tabcolsep}{3.5pt}
\caption{Regime battery on real data, five seeds, 600\,s per solver,
mean $\pm$ 95\% CI. Kernel cells (RBF with width parameter $1/d$) use a
class-balanced subsample of $n_k$ points. Same objective per row: LIN,
linear hard margin (ETA with screening vs LIBSVM's \cite{chang2011} $C{=}10^6$
surrogate; our SMO in brackets); KHM, feature-space hard margin (K-ETA
vs the same surrogate); KL2, kernel $L_2$ margin, $C{=}1$ (K-ETA vs
kernel SMO, exact objective; in brackets LIBSVM on the precomputed
$K+I/C$ kernel, fit time $+$ kernel construction, September machine).
conv = seeds in which ETA certified; oracle = $O(nd)$ column (ETA) or
row (SMO) evaluations, thousands; both cache every column or row they
compute; the ETA column is K-ETA on kernel rows. Released code, AMD EPYC
9754 (Appendix~\ref{app:details}).}\label{tab:regime}
\begin{tabular}{llrrlcllr}
\toprule
cell & dataset & $n_k$ / $d$ & ETA (s) & baseline (s) & conv & dist.\ ETA / base & acc.\ ETA / base & oracle \\
\midrule
LIN & gisette & 6000 / 5000 & $38.2\pm3.2$ & $38.5\pm3.1$ [$34.0\pm3.2$] & 5/5 & 3.6638 / 3.6640 & .981 / .981 & 1.5 [1.4] \\
\midrule
KHM & gisette & 6000 / 5000 & $64.6\pm6.2$ & $75.6\pm3.1$ & 5/5 & .05215 / .05215 & .979 / .979 & 2.6 \\
KHM & ijcnn1 & 9853 / 22 & $242\pm69$ & $1.1\pm0.2$ & 5/5 & .00430 / .00430 & .946 / .946 & 2.6 \\
KHM & a9a & 10000 / 123 & $>600$ & $18.5\pm2.0$ & 0/5 & $\le$.00002 / .0033 & .685 / .752 & 6.0 \\
KHM & covtype & 10000 / 54 & $>600$ & $122\pm33$ (maxiter) & 0/5 & $\le$.00044 / .00025 & .711 / .712 & 7.6 \\
KHM & w8a & 6479 / 300 & $7.2\pm20.1$ & --- & 5/5 & hulls intersect & --- & 0.5 \\
\midrule
KL2 & gisette & 6000 / 5000 & $89.8\pm7.8$ & $90.6\pm8.5$ [$0.5\pm0.1+3$] & 5/5 & .06882 / .06882 & .971 / .971 & 3.7 / 3.6 \\
KL2 & w8a & 6479 / 300 & $6.3\pm1.0$ & $5.2\pm1.5$ [$0.6\pm0.1+1$] & 5/5 & .05682 / .05682 & .9745 / .9745 & 3.6 / 3.6 \\
KL2 & a9a & 10000 / 123 & $13.2\pm1.0$ & $8.9\pm1.5$ [$2.2\pm0.2+2$] & 5/5 & .03019 / .03019 & .795 / .795 & 7.6 / 7.6 \\
KL2 & covtype & 10000 / 54 & $16.8\pm4.3$ & $5.8\pm1.2$ [$2.1\pm0.1+2$] & 5/5 & .02609 / .02609 & .792 / .792 & 9.1 / 9.1 \\
KL2 & ijcnn1 & 9853 / 22 & $5.3\pm0.3$ & $2.6\pm0.3$ [$1.1\pm0.1+2$] & 5/5 & .04790 / .04790 & .959 / .959 & 4.6 / 4.5 \\
\bottomrule
\end{tabular}
\end{table}

\paragraph{Real data, regime battery (Table~\ref{tab:regime}).}
TA~I first certifies separability (gisette yes, the other four no, in
$0.1$--$30$\,s). On gisette ($d{=}5{,}000$; $1{,}237$ of $6{,}000$
points in the support) ETA ties LIBSVM on the linear hard margin and our
full-cache SMO has a $1.1\times$ lower mean (overlapping intervals);
K-ETA is $1.2\times$ faster than LIBSVM on the RBF hard margin, with
identical solutions. On the kernel $L_2$ margins K-ETA computes as many
kernel columns as kernel SMO computes rows; SMO ties it on gisette and
w8a and is $1.5$--$2.9\times$ faster elsewhere (our submission's apparent
advantage came mostly from a 512-row SMO cache). Against BPCG \cite{tsuji2022} on the
same supports, cache and certificate (Table~\ref{tab:bpcg}), block ETA
needs $7$--$17\times$ fewer iterations, is $1.2$--$1.6\times$ faster on
four kernel cells and ties or trails slightly on gisette.
LIBSVM on the precomputed $K+I/C$ kernel (brackets, September run) was
$2.7$--$22\times$ faster than K-ETA in that run.

The KHM rows show where the geometry turns against the method. On
ijcnn1 both return the same solution ($819$ support vectors on
average), but
K-ETA needs $1.5$--$2.7\times10^5$ iterations and $242$\,s against
LIBSVM's $1.1$\,s: the hulls are
$0.004$ apart in a space of diameter $\approx\sqrt2$, so a relative
certificate needs an absolute gap of order $10^{-6}$, which plausibly suits LIBSVM's second-order selection better. On a9a and covtype the hulls are within $10^{-4}$ and
$6\times10^{-4}$ of touching, and neither certifies anything (LIBSVM's
surrogate is a soft margin on a9a and hits its iteration cap on covtype). On w8a the hulls intersect (duplicate points, opposite labels);
K-ETA detects this in under $0.1$\,s in four seeds and $36$\,s in
the fifth.

\paragraph{Real data, $L_2$ soft margins (Table~\ref{tab:l2}, appendix).}
The same boundary appears within one problem class. Gisette is
separable, so its $L_2$ margin has a sparse support and ETA converges to
the certified gap in $19$--$21$\,s at every $C$; our full-cache SMO ties
($1.1$--$1.3\times$ lower means, overlapping intervals). LIBLINEAR's
primal Newton solver, which the battery ran, needs $4$\,min to $1.2$\,h
at the matched tolerance $10^{-6}$ and stops $3$--$24\%$ above the
optimum at its default $10^{-4}$; its dual solver reaches the certified
optimum in $2.9$--$4.5$\,s at every tolerance, $2.9$--$4.5\times$ faster
than ETA at $C{=}1$ on the same machine (Tables~\ref{tab:libdef}
and~\ref{tab:kreal}). On gisette ETA adds only the certificate. On the four overlapping datasets LIBLINEAR's Newton solver finishes in
$0.1$--$24$\,s; ETA
converges within the cap only on a9a (four of five seeds) and ijcnn1 at
$C{=}0.1$ ($583$ and $336$\,s, against our SMO's $163$ and $59$\,s), at $C{=}10$ its
capped iterate loses $1$--$27$ accuracy points, and on covtype it is not
usable at any $C$. Uncapped, ETA certifies $10^{-3}$ on a9a, w8a and
ijcnn1 at $C{=}0.1$ in $259$--$792$\,s, $30$--$1{,}800\times$ LIBLINEAR's time across machines
(Table~\ref{tab:t1}), at a rate at least linear but with $O(n^2)$-type
cost (Figure~\ref{fig:trace}).

\section{Related work}
Distance duality and the TA \cite{kalantari2015,gupta2026}; MDM,
Gilbert and Wolfe \cite{mdm1974,lopez2015,gilbert1966,wolfe1976}; FW
for polytope distance and SVMs
\cite{gartner2009,clarkson2010,tsang2005,nanculef2014,keerthi2000,bennett2000duality};
away steps, pyramidal width and facial distance
\cite{guelat1986,lj2015,pena2019}. Boosted FW \cite{combettes2020}, swap-step-free pairwise variants \cite{tsuji2022,rinaldi2023} and block-coordinate FW
\cite{lacoste2013bcfw} are the closest algorithmic relatives; gap-safe
screening \cite{fercoq2015,ndiaye2017,shibagaki2016} and active-set
identification \cite{bomze2020,garber2020} the closest to
Theorem~\ref{thm:activation}, with \cite{elghaoui2012,ogawa2013,wang2014}
as its SVM ancestors (Appendix~\ref{app:related}).

\section{Conclusion}
Guarded block transfers give the Triangle Algorithm a linear rate with
explicit constants and no combinatorial swap-step bounds, and a safe
screening rule with a finite activation bound. With fairer baselines the
experiments narrow the niche: on well-separated hulls the solver ties
LIBSVM and beats BPCG by at most $1.6\times$, but a full-cache SMO and LIBLINEAR's dual solver are as
fast or faster; on dense supports or nearly touching hulls LIBLINEAR's
Newton solver or LIBSVM is one to over three orders of magnitude faster. What
it adds is the certificate and the intersection test, not speed. Limitations: screening pays only when the certificate closes; the code
is NumPy-level and dense; the constants are loose; the $\nu$-SVM
variant is unanalysed; $k$ is fixed (see Appendix~\ref{app:details}).

\paragraph{Acknowledgements.} We thank the OPT 2026 reviewers; one of
them supplied the example after Theorem~\ref{thm:activation}.

\paragraph{AI assistance.} AI tools were used in the preparation of the
code, experiments and manuscript; the author verified all results and
is responsible for the content.

\appendix
\section{Proofs}\label{app:proofs}

\paragraph{Conventions.} $M>0$ (a singleton $Z$ is solved at
initialisation), and the algorithm stops with ``intersect'' when
$\UB=\norm z=0$, before any division by $\UB$. Lemma~\ref{lem:block}
and Algorithm~\ref{alg:block} presuppose $k'\ge1$: a side with no
positive-gap pair is skipped, and on a non-drop iteration the side with
the larger pairwise gap has one. An away direction whose donor has
weight one is unavailable (the iterate is that vertex and the away gap
of that side is zero). In Proposition~\ref{prop:orth}, $\eta\ge0$.

\begin{lemma}[Separability]\label{lem:sep}
With the product representation, $\gPW=\gPW_V+\gPW_W$ where
$\gPW_V=\max_is^V_i-\min_{i\in\operatorname{supp}\alpha}s^V_i\ge0$ and
$\gPW_W$ likewise; the same holds for $\gFW$ and $\gA$.
\end{lemma}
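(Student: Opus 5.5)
The plan is to write each gap as a maximum or minimum over the atoms $v_i-w_j$ and then use separability of the linear functional together with the product structure of the support. Throughout, $g=q-p$ is fixed at the current iterate. For any atom, $\ip{g}{v_i-w_j}=\ip{g}{v_i}-\ip{g}{w_j}=s^V_i+s^W_j$, because $s^W_j=\ip{p-q}{w_j}=-\ip{g}{w_j}$.

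\textbf{Inner product with the iterate.} Since $z=p-q$ with $p=\sum_i\alpha_iv_i$ and $q=\sum_j\beta_jw_j$, we get $\ip{g}{z}=\sum_i\alpha_is^V_i+\sum_j\beta_js^W_j=:\bar s^V+\bar s^W$. So $\ip{g}{z}$ also splits into a $V$-part and a $W$-part, each a convex combination of that side's scores.

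\textbf{Frank--Wolfe gap.} The maximum of a linear function over $Z=\conv\{v_i-w_j\}$ is attained at an atom. Its value is $\max_{i,j}(s^V_i+s^W_j)=\max_is^V_i+\max_js^W_j$, since the index sets are independent. Hence
\[
\gFW=\big(\max_is^V_i-\bar s^V\big)+\big(\max_js^W_j-\bar s^W\big)=:\gFW_V+\gFW_W .
\]
Each term is nonnegative, because a maximum dominates a convex combination.

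\textbf{Away gap.} The atoms in $\operatorname{supp}\lambda$ are exactly the pairs in $\operatorname{supp}\alpha\times\operatorname{supp}\beta$. The minimum of $s^V_i+s^W_j$ over this product set therefore splits as $\min_{i\in\operatorname{supp}\alpha}s^V_i+\min_{j\in\operatorname{supp}\beta}s^W_j$. This gives
\[
\gA=\big(\bar s^V-\min_{\operatorname{supp}\alpha}s^V\big)+\big(\bar s^W-\min_{\operatorname{supp}\beta}s^W\big)=:\gA_V+\gA_W .
\]
Each term is nonnegative, since $\bar s^V$ is a convex combination supported on $\operatorname{supp}\alpha$ (and likewise on $W$).

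\textbf{Pairwise gap.} Adding the two splittings, the averages $\bar s^V$ and $\bar s^W$ cancel. This gives $\gPW_V=\max_is^V_i-\min_{i\in\operatorname{supp}\alpha}s^V_i\ge0$, and symmetrically on $W$.

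\textbf{Main obstacle.} The only delicate point is the product-support step. It uses $\lambda_{ij}=\alpha_i\beta_j>0$ if and only if $\alpha_i>0$ and $\beta_j>0$. If the support were not a product set, the minimum over it would not split into independent side minima, and the away term would not be separable.

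The $O(n+m)$ cost claim then follows directly: each side term needs one pass over its score array to compute the max, the support min and the weighted average.
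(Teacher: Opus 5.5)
Your proof is correct and follows the paper's argument. The separability $\ip{g}{v_i-w_j}=s^V_i+s^W_j$, together with $\operatorname{supp}\lambda=\operatorname{supp}\alpha\times\operatorname{supp}\beta$, splits the max over atoms and the min over the support, and nonnegativity follows because the weighted mean $\bar s^V$ (resp.\ $\bar s^W$) lies between the support minimum and the overall maximum; you only make explicit the splitting $\ip{g}{z}=\bar s^V+\bar s^W$ that the paper's short proof uses implicitly.
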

\begin{proof}
$\ip{g}{v_i-w_j}=s^V_i+s^W_j$ is separable and the support is a product
set, so the max over atoms and the min over the support separate. Each
side term is nonnegative: the FW term is the maximum score minus the
weighted mean of the support scores, the away term is that mean minus
the support minimum, and a weighted mean lies between the minimum and
the maximum.
\end{proof}

\begin{lemma}[Side transfers are $Z$-pairwise directions]\label{lem:side}
For $\alpha_u>0$ and any $b\in\operatorname{supp}\beta$,
$v_r-v_u=(v_r-w_b)-(v_u-w_b)$ is a difference of an atom of $Z$ and an
atom in the support, and the transfer $\gamma\le\alpha_u$ realises
$z\leftarrow z+\gamma(v_r-v_u)$ inside $Z$. The side capacity
$\alpha_u=\sum_{b}\alpha_u\beta_b$ is the total weight of the product
atoms $v_u-w_b$, $b\in\operatorname{supp}\beta$: the side transfer is
the sum of the atom-pair transfers of masses $\gamma\beta_b$ from
$v_u-w_b$ to $v_r-w_b$, and for $r\ne u$ this weight update is feasible
exactly when $0\le\gamma\le\alpha_u$; hence every such transfer remains
in $Z$.
\end{lemma}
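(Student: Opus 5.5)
The statement immediately above is Lemma~\ref{lem:side}. The plan is to verify it directly from the product representation $\lambda_{ij}=\alpha_i\beta_j$, without appeal to any earlier result beyond the definition of $Z$ and its atoms.

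First, the identity $v_r-v_u=(v_r-w_b)-(v_u-w_b)$ is immediate. Both terms are atoms of $Z$, and since $\alpha_u>0$ and $\beta_b>0$ we have $\lambda_{ub}=\alpha_u\beta_b>0$. Hence $v_u-w_b$ lies in $\operatorname{supp}\lambda$, and $v_r-v_u$ is a pairwise (toward-minus-away) direction of $Z$.

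Second, I would write the side transfer at the level of the simplex weights: $\alpha'=\alpha+\gamma(e_r-e_u)$ with $\beta$ unchanged. The induced product weights are then $\lambda'_{ib}=\alpha'_i\beta_b$. For each $b$ this means $\lambda'_{ub}=\lambda_{ub}-\gamma\beta_b$ and $\lambda'_{rb}=\lambda_{rb}+\gamma\beta_b$, with all other entries unchanged. This is exactly a family of atom-pair transfers of mass $\gamma\beta_b$ from $v_u-w_b$ to $v_r-w_b$. Summing them gives the change in $z$:
\[
\sum_b\gamma\beta_b\bigl((v_r-w_b)-(v_u-w_b)\bigr)=\gamma(v_r-v_u),
\]
using $\sum_b\beta_b=1$. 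This also shows that the side capacity satisfies $\alpha_u=\sum_b\alpha_u\beta_b$, the total weight of the atoms $v_u-w_b$.

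Third, for feasibility with $r\ne u$: the new weights are nonnegative exactly when $\alpha_u-\gamma\ge0$. Equivalently, $\lambda'_{ub}\ge0$ for every $b\in\operatorname{supp}\beta$, since $\beta_b>0$ there. The receiver entries only increase, so $\gamma\ge0$ keeps them valid. The total mass is preserved, so $\alpha'$ stays in the simplex. Thus the update is feasible if and only if $0\le\gamma\le\alpha_u$, and then $z'=p'-q\in\conv(V)-\conv(W)=Z$.

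None of these steps is a real obstacle. The only care needed is the "exactly when" direction, which requires restricting to $b\in\operatorname{supp}\beta$ so that each $\beta_b>0$, and the requirement $r\ne u$, since for $r=u$ the transfer is trivially feasible for every $\gamma$.
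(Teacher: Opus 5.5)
Your proposal is correct and follows the paper's own argument. The paper gives no separate proof: its justification is the decomposition written into the statement, namely the update $\alpha\mapsto\alpha+\gamma(e_r-e_u)$ with $\beta$ fixed, split into atom-pair transfers of mass $\gamma\beta_b$ from $v_u-w_b$ to $v_r-w_b$, summed with $\sum_b\beta_b=1$, with feasibility read off the donor entries.

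The only nit is your ``if and only if''. It shows that $\gamma\ge0$ is sufficient for the receiver entries, not that it is necessary: if $\alpha_r>0$, any $\gamma\in[-\alpha_r,0)$ is also feasible. So, exactly as in the statement, the lower bound $0\le\gamma$ should be read as part of what a ``transfer'' from $u$ to $r$ means, not as a consequence of feasibility.
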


\begin{proof}[ of Lemma~\ref{lem:block}]
Since $\gamma_j\le\mathrm{gap}_j/\norm{d_j}^2$, we have
$\gamma_j\mathrm{gap}_j\ge\gamma_j^2\norm{d_j}^2=b_j^2$. Consequently,
$S=\sum_j\gamma_j\mathrm{gap}_j\ge\sum_jb_j^2$. By the triangle
inequality, $\norm d\le\sum_j\gamma_j\norm{d_j}=\sum_jb_j$, and by
Cauchy--Schwarz, $(\sum_jb_j)^2\le k'\sum_jb_j^2\le k'S$; hence
$D\le k'S$. Exact line search on the $1$-smooth $F$ gives
$\Delta_B=S^2/(2D)\ge S/(2k')$ in the interior case and
$\Delta_B=S-D/2\ge S/2$ at the boundary $\theta=1$, where $D\le S$. Finally,
$S\ge\gamma_1\mathrm{gap}_1\ge\Delta_1$, since all terms
are nonnegative and
$\Delta_1=\gamma_1\mathrm{gap}_1-\gamma_1^2\norm{d_1}^2/2$.
\end{proof}

\begin{proof}[ of Lemma~\ref{lem:good}]
Let $g_S=\max(\gPW_V,\gPW_W)\ge\gPW/2$. Every direction below is a
difference of two points of one class hull, so its norm is at most
$\operatorname{diam}(\conv V)\le M$. For a feasible direction with
directional derivative $\sigma>0$, squared norm $D$ and maximal step
$\bar\theta$, exact line search on a $1$-smooth function gives
$\mathrm{gain}\ge\min(\sigma^2/(2D),\,\bar\theta\sigma/2)$ (interior
optimum, or the clipped step evaluated with $\bar\theta D\le\sigma$).
In case (a), the guard's gain is at least the MDM gain, whose
derivative is $g_S$ and whose interior optimum is attained, yielding
$\mathrm{gain}\ge g_S^2/(2M^2)\ge(\gPW)^2/(8M^2)$.
In case (b), non-drop, $g_S=\gFW_{\text{side}}+\gA_{\text{side}}$, so
one term is at least $g_S/2\ge\gPW/4$. If it is the FW term, the toward step
(maximal step $1$) gives
$\mathrm{gain}\ge\min((\gFW_{\text{side}})^2/(2M^2),\gFW_{\text{side}}/2)\ge\min((\gPW)^2/(32M^2),\gPW/8)$;
if it is the away term, the away line search is interior (non-drop) and
$\mathrm{gain}\ge(\gA_{\text{side}})^2/(2M^2)\ge(\gPW)^2/(32M^2)$. The
algorithm takes the best available candidate.
\end{proof}

\begin{lemma}[Geometric strong convexity; {\cite[Thm.~6]{lj2015}} with
$\mu=1$]\label{lem:gsc}
For every $z\in Z$ with a representation whose support contains an away
atom, $\gPW(z)\ge\delta\sqrt{2h(z)}$ and $\gPW(z)\ge\gFW(z)\ge h(z)$.
\end{lemma}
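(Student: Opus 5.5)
The plan is to derive both inequalities from the Lacoste-Julien--Jaggi geometric strong convexity argument, specialised to $F(z)=\tfrac12\norm z^2$ with $\mu=1$. Throughout, $z\in Z$, $g=-\nabla F(z)=-z$, and $z^*$ is the minimiser.

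\textbf{The easy chain.} Since $\gA\ge0$ by Lemma~\ref{lem:sep}, we get $\gPW=\gFW+\gA\ge\gFW$. For $\gFW\ge h$, I would use convexity of $F$. Taking $u=z^*$ in the maximum defining $\gFW$ gives
\[
\gFW(z)\ge\ip{g}{z^*-z}=\ip{\nabla F(z)}{z-z^*}\ge F(z)-F(z^*)=h(z).
\]
In fact strong convexity gives the sharper bound $h+\tfrac12\norm{z-z^*}^2$, but it is not needed.

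\textbf{The pyramidal-width bound.} I will follow \cite[Thm.~6]{lj2015}. If $z=z^*$ the claim is trivial, so assume $z\ne z^*$.

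First, strong convexity applied to the segment towards $z^*$ gives, for every $\gamma$,
\[
F(z+\gamma(z^*-z))\ge F(z)+\gamma\ip{\nabla F(z)}{z^*-z}+\tfrac{\gamma^2}{2}\norm{z^*-z}^2.
\]
Minimising the right-hand side over $\gamma$ yields
\[
h\le \frac{\ip{-\nabla F(z)}{z^*-z}^2}{2\norm{z^*-z}^2}=\frac{\ip{g}{\hat e}^2}{2},\qquad \hat e=\frac{z^*-z}{\norm{z^*-z}}.
\]

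Second, I invoke the pyramidal-width inequality, which is the defining property of $\PW(A)$ via \cite[Lemma~5 and Thm.~6]{lj2015}. It says that for any $z$ with an active set $S\subseteq A$ (here $S$ is the product support of $\lambda$) and any feasible direction $e$ pointing into $Z$ with $\ip{g}{e}>0$,
\[
\frac{\ip{g}{\hat e}}{\ip{g}{s-a}}\le\frac{1}{\PW(A)},
\]
where $s=\arg\max_{u\in A}\ip{g}{u}$ and $a=\arg\min_{u\in S}\ip{g}{u}$. The denominator is exactly $\ip{g}{s}-\ip{g}{a}=\gFW+\gA=\gPW$. Applying this with $e=z^*-z$, which is feasible because $z^*\in Z$ and has $\ip{g}{e}\ge h>0$, gives $\ip{g}{\hat e}\le\gPW/\delta$. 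Combined with the first step, $h\le(\gPW)^2/(2\delta^2)$, which is $\gPW\ge\delta\sqrt{2h}$.

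The hypothesis that the support contains an away atom is what makes $a$ well defined. The product representation $\lambda_{ij}=\alpha_i\beta_j$ is a legitimate convex representation of $z$ over $A$, with support $\operatorname{supp}\alpha\times\operatorname{supp}\beta$. So the LJ result applies with this $S$. One point to stress is that the pyramidal width is a minimum over all active sets, so it holds for this particular $S$ as well.

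\textbf{Main obstacle.} The real content is in transferring \cite{lj2015}'s lemma, which bounds the ratio of the projected directional derivative to the pairwise gap, to our setting. Two details need checking. The first is that it is stated for arbitrary active sets of the given atom set, not only minimal ones. The second is that, in our gaps, the maximum in $\gFW$ runs over all atoms $v_i-w_j$ while the minimum in $\gA$ runs over the product support. Both match their $s$ and $a$, so I expect to cite the result directly rather than reprove the geometric inequality.
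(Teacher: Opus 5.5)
Your proposal is correct and is essentially the paper's proof. Both take the pyramidal-width inequality of \cite[Thm.~6]{lj2015} for the product support in the form $\gPW\ge\delta\ip{g}{\hat e}$, with $\hat e=(z^*-z)/\norm{z^*-z}$, and combine it with $1$-strong convexity; your minimisation of the quadratic in $\gamma$ is the same computation as the paper's AM--GM step $\delta(h/\varrho+\varrho/2)\ge\delta\sqrt{2h}$, and both obtain $\gPW\ge\gFW\ge h$ from $\gA\ge0$ and convexity at $u=z^*$.
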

\begin{proof}
This is Theorem~6 of \cite{lj2015} for a $1$-strongly convex objective;
we include the short argument. If $z=z^*$ both inequalities are
immediate ($h=0$ and all gaps are nonnegative). Otherwise let
$e=(z^*-z)/\norm{z^*-z}$; the pyramidal-width inequality in the proof
of \cite[Thm.~6]{lj2015} (Definition~3 there, applied to the direction
$g$ together with $\ip{g}{e}\le\norm g$) gives, for the FW atom $s$ and
the away atom $v$, $\ip{g}{s-v}\ge\delta\ip{g}{e}$. By $1$-strong convexity,
$\ip{g}{z^*-z}\ge h+\tfrac12\norm{z^*-z}^2$, so with
$\varrho=\norm{z^*-z}$,
$\gPW\ge\delta(h/\varrho+\varrho/2)\ge\delta\sqrt{2h}$ by AM--GM. Finally
$\gPW=\gFW+\gA\ge\gFW=\max_u\ip{g}{u-z}\ge\ip{g}{z^*-z}\ge h$ by
convexity.
\end{proof}

\begin{proof}[ of Theorem~\ref{thm:linear}]
By Lemmas~\ref{lem:good} and~\ref{lem:gsc} a non-drop iteration gains
at least
\[\min\big(2\delta^2h/(32M^2),\,h/8\big)=(\delta/M)^2h/16\]
using $\delta\le M$; hence $h_{t+1}\le(1-\rho)h_t$. Drop counting:
non-drop
iterations add at most $k$ support indices, drop iterations add none
and remove at least one, and the support size stays $\ge2$ from $s_0$;
so the number of removals, hence of drops, is at most
$kT_{\mathrm{good}}+s_0$, so that
\[T_{\mathrm{good}}\ge(T-s_0)/(k+1).\]
All steps, drops included, are descent steps, so $h$ is monotone.
\end{proof}

\begin{lemma}[Two-sided schedule]\label{lem:twosided}
Let every iteration compute both side gaps, take the guarded step of
Algorithm~\ref{alg:block} on the side with the larger pairwise gap
first, and take the other side's guarded step only if the first step
was not a drop; an iteration counts as a drop iff its first step is a
drop. Then every non-drop iteration gains at least
$\min((\gPW)^2/(32M^2),\,\gPW/8)$ and
the number of drops is at most $2kT_{\mathrm{good}}+s_0$; hence
Theorem~\ref{thm:linear} holds for this schedule with the same $\rho$
and exponent $(T-s_0)/(2k+1)$.
\end{lemma}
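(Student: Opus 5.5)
The plan is to reduce the two-sided schedule to the one-sided analysis of Theorem~\ref{thm:linear}, treating the progress claim and the drop-count claim separately.

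\emph{Progress.} On a non-drop iteration the first step is taken on the side with the larger pairwise gap, $g_S=\max(\gPW_V,\gPW_W)\ge\gPW/2$. The first step is therefore exactly the situation of Lemma~\ref{lem:good}. Its case analysis gives the first step's gain as at least $\min((\gPW)^2/(32M^2),\gPW/8)$:
\begin{itemize}
\item in case (a), the guard and Lemma~\ref{lem:block} give at least the MDM gain;
\item in case (b) with no drop, the best of the four candidates covers both the toward and the away subcase.
\end{itemize}
The second step, if it is taken, is a guarded step of Algorithm~\ref{alg:block} and is hence a descent step. Every candidate is an exact line search along a feasible direction, and the guarded step compares candidates by gain, so $h$ does not increase. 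The total gain of the iteration is thus at least that of the first step. Combining with Lemma~\ref{lem:gsc} exactly as in the proof of Theorem~\ref{thm:linear} yields $h_{t+1}\le(1-\rho)h_t$ with the same $\rho=\tfrac1{16}(\delta/M)^2$. One point needs care: $\gPW$ must be evaluated at $z_t$, before both steps, and that is the quantity the first step's bound uses.

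\emph{Drop counting.} I will count support indices on $\operatorname{supp}\alpha\cup\operatorname{supp}\beta$, whose total size starts at $s_0=2$ and never falls below $2$, since each side keeps at least one index. A non-drop iteration performs at most two guarded steps, one per side, and each step adds at most $k$ new receivers. So a non-drop iteration adds at most $2k$ indices. A drop iteration performs only its first step, which adds nothing and removes at least one index. The second step of a non-drop iteration may also remove indices, but removals are only helpful and are not needed for the count. The number of drop iterations is therefore at most the total number of additions plus $s_0$ minus the floor. This gives at most $2kT_{\mathrm{good}}+s_0$, hence $T_{\mathrm{good}}\ge(T-s_0)/(2k+1)$. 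Monotonicity of $h$ across drops then gives $h_T\le h_0(1-\rho)^{(T-s_0)/(2k+1)}$.

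\emph{Expected obstacle.} The delicate part is making sure a drop step never adds an index and that a drop iteration is not credited with a second step. This needs the convention that an iteration counts as a drop exactly when its first step is one, and then no second step is taken. I also need to check the case where the side with the larger gap has no positive-gap pair. By the Conventions this cannot happen on a non-drop iteration, because that side has $g_S>0$ whenever $z\ne z^*$; if $z=z^*$ the bound is trivial.
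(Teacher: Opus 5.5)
Your proposal is correct and follows the paper's proof. The first step of a non-drop iteration is exactly Lemma~\ref{lem:good}'s step, the second is a line search that cannot increase $h$, and the count holds because a drop iteration skips the second side (adding no index) while a non-drop iteration adds at most $k$ per side; your floor-of-$2$ bookkeeping even gives the marginally sharper $2kT_{\mathrm{good}}+s_0-2$.
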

\begin{proof}
On a non-drop iteration the first step is exactly the step analysed in
Lemma~\ref{lem:good}, so it gains at least the stated amount; the
second step is an exact line search along a feasible descent direction
(or is skipped), so it cannot increase $h$. Drop counting: a drop
iteration takes a single away step that removes one support index and
adds none, because the second side is skipped; a non-drop iteration
takes two guarded steps and adds at most $k$ indices each. Removals,
hence drops, therefore number at most $2kT_{\mathrm{good}}+s_0$.
\end{proof}
\begin{remark}
The skip is what makes the count work. If the second side always
steps, an iteration whose analysed side drops can add up to $k$ indices
on the other side, and the count becomes
$T_{\mathrm{drop}}\le2kT_{\mathrm{good}}+kT_{\mathrm{drop}}+s_0$, which is
vacuous for $k\ge1$.
\end{remark}

\begin{proof}[ of Proposition~\ref{prop:orth}]
$D=\sum_{i,j}\gamma_i\gamma_j\ip{d_i}{d_j}\le\sum_jb_j^2+\eta\sum_{i\ne j}b_ib_j\le(1+\eta(k'-1))\sum_jb_j^2\le(1+\eta(k'-1))S$
by Cauchy--Schwarz on the cross terms. If $D\ge S$ the line search is
interior and $\Delta_B=S^2/(2D)\ge S/(2(1+\eta(k'-1)))$; if $D<S$ then
$\theta=1$ and $\Delta_B=S-D/2>S/2$, which is larger. For uncapped pairs
$S=\sum_j2\Delta_j$.
\end{proof}

\begin{proof}[ of Lemma~\ref{lem:gapid}]
$\LB=\min_{u\in Z}\ip{z}{u}/\norm z$ by definition of the supporting
hyperplane bound and $\UB=\norm z$; subtracting,
$\UB-\LB=(\ip{z}{z}-\min_u\ip{z}{u})/\norm z=\gFW/\UB$. At the optimum
$0=\ip{z^*}{z^*}-\min_u\ip{z^*}{u}$; subtracting from $\gFW(z)$,
$\gFW(z)=(\norm z^2-\norm{z^*}^2)+(\min_u\ip{z^*}{u}-\min_u\ip{z}{u})\le2h+\max_u\ip{z^*-z}{u}\le2h+R\norm{z-z^*}\le2h+R\sqrt{2h}$
by $1$-strong convexity.
\end{proof}

\begin{proof}[ of Theorem~\ref{thm:activation}]
Strong convexity gives $\norm{z-z^*}\le r$ for the screening radius
$r=\sqrt{\UB^2-\max(\LB,0)^2}$ ($\tau$ is defined only if some point
has positive margin; otherwise there is nothing to screen). Let
$m^*=\min_j\ip{z^*}{v_j}$, choose $j'\in F_V$ and set $v'=v_{j'}$; since $v_{\min}$ minimises $\ip{z}{\cdot}$,
$\ip{z^*}{v_{\min}}-m^*=\ip{z}{v_{\min}-v'}+\ip{z^*-z}{v_{\min}-v'}\le rL$.
Hence $\ip{z}{v_i-v_{\min}}=\ip{z^*}{v_i-v_{\min}}+\ip{z-z^*}{v_i-v_{\min}}\ge\tau_i-rL-r\norm{v_i-v_{\min}}$,
and the rule fires when this exceeds $r\norm{v_i-v_{\min}}$, for which
$\tau_i>3rL$ suffices since $\norm{v_i-v_{\min}}\le L$. For the
activation time, $r^2\le2R(\UB-\LB)$ (for $\LB\ge0$,
$r^2=(\UB-\LB)(\UB+\LB)$ and $\UB+\LB\le2R$; for $\LB<0$, $r^2=\UB^2$
and $\UB\le2(\UB-\LB)$), and
$2R(\UB-\LB)\le2R\gFW/\UB\le(2R/\delta^*)(2h+R\sqrt{2h})\le(4R^2/\delta^*)\sqrt{2h}$
(Lemma~\ref{lem:gapid}, with equality in the first step for the freshly
evaluated bound; the last step uses $h\le R^2/2$, which holds for every
feasible iterate since $h\le\tfrac12\norm z^2$); requiring
$r^2\le\tau^2/(9L^2)$ yields $h_t<H$, and Theorem~\ref{thm:linear} with
$\log(1/(1-\rho))\ge\rho$ converts it into the iteration bound (the
$\max\{1,\cdot\}$ covers $h_0\le H$, where the condition holds from the
start by monotonicity). The removal is executed at the first screening
round after the condition holds; permanence follows from the safety of
the rule.
\end{proof}

\section{Exact reductions}\label{app:red}

\paragraph{$L_2$ soft margin.} The problem
$\min\tfrac12\norm w^2+\tfrac C2\sum_i\xi_i^2$ is the hard-margin problem
on the augmented points $\tilde x_i=[x_i;\,e_i/\sqrt C]$, $e_i$ the
$i$-th standard basis vector, whose Gram
matrix is $K+I/C$ ($K$ the Gram matrix of the original points) and whose
augmented hulls never intersect; let $\delta_C$ denote their distance. The
augmentation is never materialised: the extra coordinates of an iterate
are exactly its convex weights, so every augmented inner product is a
cached original-space product plus a sparse diagonal correction,
$\ip{\tilde p}{\tilde v_i}=\ip{p}{v_i}+\alpha_i/C$ and
$\norm{\tilde p}^2=\norm p^2+\norm\alpha^2/C$. The separator is recovered
as $w=2(p-q)/\delta_C^2$, $b=(\norm{\tilde q}^2-\norm{\tilde p}^2)/\delta_C^2$,
and the primal value is $P=2/\delta_C^2$. Validation: the solver agrees
with an exact QP on the explicitly augmented problem to $10^{-10}$
across random instances and values of $C$; the regularisation path
$\delta_C$ is monotone in $C$; on a9a ($C{=}0.1$) its certified primal
$687.07$ equals LIBLINEAR's to five digits.

\paragraph{$\nu$-SVM.} The $L_1$ (hinge) soft margin is the nearest-point
problem between \emph{reduced} convex hulls
$R(V,\mu)=\{\sum a_iv_i:\ \sum a_i=1,\ 0\le a_i\le\mu\}$,
$\mu=2/(\nu\ell)$ with $\ell=n+m$ the number of training points
\cite{bennett2000duality}. Two changes: the
extreme-point oracle becomes a capped top-$k$ blend (weight $\mu$ on the
$\lfloor1/\mu\rfloor$ best points in the search direction, via a partial
sort, still $O(n)$), and the primary step is the capped MDM transfer with
blended toward steps as fallback; caching and the reduced
support-function lower bound carry over unchanged. Validation: agreement
with the exact box-constrained QP to $10^{-9}$--$10^{-14}$ and with
LIBSVM's NuSVC to $\sim10^{-10}$ through the mapping
$\delta^*=\norm{w_{\mathrm{svc}}}/\varsigma$, $w_{\mathrm{svc}}$ LIBSVM's
weight vector and $\varsigma$ the per-class
sum of the rescaled dual coefficients. The reduced hull is a polytope, so
\eqref{eq:P} still applies, but its vertices are blends and the capped
transfer is not the pairwise step analysed in \S\ref{sec:theory}; the
analysis of this variant is left open.

\paragraph{Kernels.} The algorithm touches data only through inner
products, so kernelisation changes plumbing, not algorithm: Gram columns
become kernel columns $\kappa(V,x_i)$, squared norms become kernel
diagonals,
iterates exist purely as convex weights (the exact cache refresh is
reconstructed from the support's cached kernel columns), and the
classifier is the kernel expansion of the witness-pair bisector. A ridge
$K+I/C$ gives the kernel $L_2$ margin by the same diagonal trick.
Validation: the linear kernel reproduces the Euclidean solver to
$10^{-12}$; RBF distances match the exact kernel QP to $10^{-9}$ and
LIBSVM's hard-margin solution, via
$\norm{w_{\mathcal H}}^2=(\alpha\circ y)^\top K(\alpha\circ y)$, $w_{\mathcal H}$ the
feature-space normal, for
LIBSVM's dual coefficients $\alpha$ and labels $y$, to $10^{-5}$;
kernel SMO matches the QP to $10^{-8}$.

\section{Experimental details}\label{app:details}

\paragraph{Solver settings and tolerance equivalence.} ETA: block
transfers with $k=16$, guarded, away fallback, pairwise anti-zig-zag,
stop at certified relative gap $10^{-3}$, iteration cap $2\times10^6$,
600\,s cap. Every result comes from the released code (schedule of
Lemma~\ref{lem:twosided}, case-(b) fallback in every path). Most tables and figures of real-data and synthetic timings were
rerun in October, after this paper's acceptance, with three
baseline-fairness changes: our SMO keeps every kernel row it computes
(in the submission it had a 512-row cache), LIBLINEAR's dual solver is
run beside its Newton solver on gisette and a9a, and the step ablation
adds exact-search and away-step rows and the w8a, a9a and covtype kernel
cells. Between the two versions K-ETA's kernel $L_2$ times changed by
under $1.5\times$ and kernel SMO's by $2$--$5.5\times$, so the
submission's K-ETA wins over SMO came mostly from that cache. Not rerun:
Tables~\ref{tab:sched} and~\ref{tab:kabl}, the instrumented runs behind
the pair correlations and the cost model of this appendix, and the
September entries named in the hardware paragraph. The submitted
version's single-surviving-pair path took the possibly
capacity-clipped MDM transfer directly, bypassing the fallback; the fix
changes a block trajectory only on such iterations. Block assembly copies the $k'$ cached columns into a $k'\times n$
array, an $O(k'n)$ copy, before the $O(k'^2)$ products. Screening runs
every 25 iterations, only in the LIN cell, where it never activated at $10^{-3}$ (Theorem~\ref{thm:activation} gives a
sufficient condition only, which these runs stop before meeting). Our
SMO \cite{platt1998}: maximal-violating-pair selection, KKT gap $10^{-3}$,
same caps; hard margin as $C=10^{12}$; kernel SMO on $K+I/C$; a row
cache that keeps every row it computes, as ETA's column cache does
(covtype in Table~\ref{tab:l2} still uses the 512-row cache). LIBSVM
on the precomputed $K+I/C$ kernel (scikit-learn \texttt{SVC},
\texttt{kernel='precomputed'}, $C{=}10^6$, tolerance $10^{-3}$, five
seeds; \texttt{src/libsvm\_kl2.py}) solves the same kernel $L_2$
margin: the RBF kernel with $1/C$ on the diagonal is built with BLAS in
$1$--$3$\,s and the fit takes $0.5$--$2.2$\,s, with distances equal to
K-ETA's to five digits. LIBLINEAR \cite{fan2008} (scikit-learn
\cite{pedregosa2011} 1.8 \texttt{LinearSVC} with its default
\texttt{dual=\textquotesingle auto\textquotesingle}, which selects the primal trust-region Newton solver
on these datasets, not dual coordinate descent): squared hinge with parameter
$C/2$, so that its objective equals $\tfrac12\norm w^2+\tfrac C2\sum\xi_i^2$;
tolerance $10^{-6}$, chosen so that its final primal matches ETA's
certified value on every cell where ETA converges except gisette at
$C{=}10$ (a9a $687.07$ vs $687.07$; gisette $0.14900$ vs $0.14893$,
inside ETA's certified $10^{-3}$; ijcnn1 $550.61$ vs $550.61$); this
tolerance is why the Newton solver needs $17{,}550$ iterations and
$32$\,min on gisette at $C{=}1$, and $44{,}569$ iterations and $1.2$\,h
at $C{=}10$, where it nevertheless stops $4.3\%$ above the certified optimum ($0.15532$ vs $0.14899$). Table~\ref{tab:libdef} sweeps the
tolerance for both solvers: at the default $10^{-4}$ the Newton solver
takes $3$--$40$\,s on gisette but stops $3$--$24\%$ above the optimum,
while the dual solver reaches it in $3$--$4.5$\,s at every tolerance; on
a9a the order reverses.
Intercept scaling $100$, so LIBLINEAR regularises $b/100$ where the
geometric formulation leaves $b$ free (the primals agree to five
digits); iteration cap
$2\times10^5$. LIBSVM \cite{chang2011} (scikit-learn \texttt{SVC}): tolerance $10^{-3}$, 1000\,MB cache,
iteration cap $2\times10^6$ (reported as \emph{maxiter} when hit); hard
margins as $C=10^6$; RBF kernel $\exp(-\norm{x-x'}^2/d)$. Oracle counts: ETA counts
Gram or kernel columns evaluated against a full class ($O(nd)$ or
$O(n_kd)$ each); SMO counts kernel rows, the same unit. Columns are
cached without eviction and the block Gram matrix $\Gamma$ is
assembled from four cached entries per pair, so peak cache memory is
the column count times $n_k$ doubles: $0.73$\,GB for the largest cell
(covtype KL2, $9{,}100$ columns), $0.61$\,GB for a9a KL2 and
$0.36$\,GB for ijcnn1 KL2; measured peak resident memory in the
isolated runs of Table~\ref{tab:kreal} is $0.74$\,GB on gisette
(data-dominated), and on the kernel cells $0.49$--$0.93$\,GB for ETA and
$0.67$--$1.27$\,GB for BPCG (per-cell means).

\paragraph{Data and seeds.} a9a, w8a, ijcnn1 and gisette use the LIBSVM
training and test files; covtype (binary, scaled) is split 75/25 by
seed and its training part subsampled to $100{,}000$ points. Features are standardised
on the training set. Kernel cells use a class-balanced subsample of at
most $5{,}000$ points per class (w8a's minority class has $1{,}479$
points, hence $n_k=6{,}479$; gisette uses all $6{,}000$). Seeds vary
the covtype split, the kernel subsamples and the timing environment; on
datasets with fixed splits the deterministic solvers return identical
solutions across seeds, which is why accuracies are reported without
intervals.
Confidence intervals are $t$-based 95\% intervals over seeds.

\begin{figure}[h]\centering
\includegraphics[width=.8\linewidth]{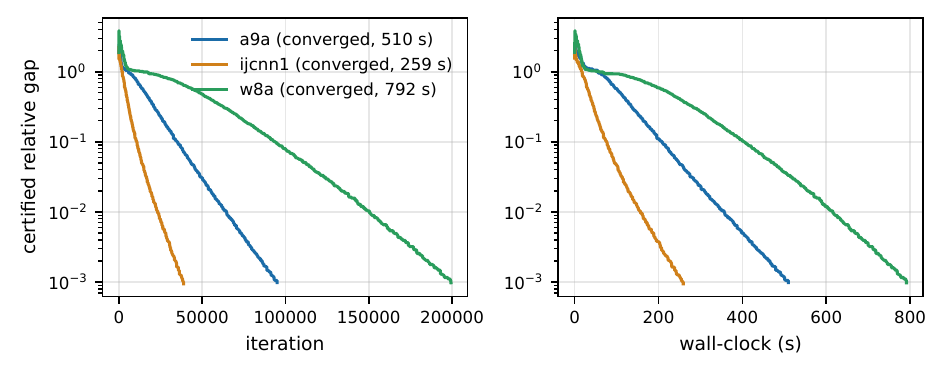}
\caption{Certified relative gap of ETA on a9a, w8a and ijcnn1 ($L_2$, $C=0.1$, uncapped, seed 0, AMD EPYC 7B13), versus iteration
and wall-clock: linear-rate
convergence in the dense-support regime, reaching the certified
$10^{-3}$ in $259$--$792$\,s against the $600$\,s cap of
Table~\ref{tab:l2}. Heuristically ($\delta$ has no known bound in terms
of the class hulls), $n_s\approx0.6n$ active atoms forming a
near-regular simplex have pyramidal width $\sqrt{2/n_s}$ times the side
\cite{lj2015}, so $\rho\sim1/n_s$ and the cost is $O(n^2)$-type.}\label{fig:trace}
\end{figure}

\paragraph{Hardware and concurrency.} The October rerun used two
dedicated cloud VMs with the same software (Linux, Python 3.14.7, NumPy
2.4.4 with OpenBLAS 0.3.31, SciPy 1.17.1, scikit-learn 1.8.0,
\texttt{OMP\_NUM\_THREADS=1}, no other load). A 16-vCPU AMD EPYC 9754
(60\,GB RAM) ran the regime battery (Table~\ref{tab:regime}) and the
synthetic benchmark (Figure~\ref{fig:final}) four single-threaded cells
at a time, the $L_2$ battery (Table~\ref{tab:l2}) up to four at a time,
and the runs of Figure~\ref{fig:shrink} one at a time. A 32-vCPU AMD
EPYC 7B13 (128\,GB RAM) ran the step ablation and the BPCG comparison
(Tables~\ref{tab:kreal} and~\ref{tab:bpcg}) one configuration at a time,
the LIBLINEAR sweep (Table~\ref{tab:libdef}) three processes at a time,
the uncapped traces (Figure~\ref{fig:trace}) one at a time, and
covtype's LIBLINEAR entries. Comparisons are made within one machine
except the trace-to-LIBLINEAR ratios of Section~\ref{sec:exp}, some
entries of Table~\ref{tab:t1} and the brackets of Table~\ref{tab:regime}
(September LIBSVM times beside October K-ETA times); ratios across
machines and loads can differ by more than $1.5\times$ (ETA on the gisette $L_2$ margin at $C{=}1$: $13.0$\,s
in isolation on the EPYC 7B13, $20.6$\,s in the EPYC 9754 battery).
September results, on a 32-vCPU Intel Xeon 2.6\,GHz VM (122\,GB RAM,
Python 3.12): covtype's ETA and SMO entries in Table~\ref{tab:l2}
(four cells at a time, SMO with a 512-row cache; on covtype's
$100{,}000$ training points ETA's and SMO's caches each reach up to
$80$\,GB, and full-cache runs were killed for lack of memory on both
October machines), where both timed out on every
cell; the LIBSVM-precomputed brackets of Table~\ref{tab:regime} (up to
19 concurrent processes); the instrumented pair-correlation and cost-model runs of
this appendix; and Tables~\ref{tab:sched} and~\ref{tab:kabl}, whose
absolute times carry an uncertain load, so only their within-table
ratios should be read. Concurrent load can manufacture ratios: under
September's 19-process load guarded $k{=}1$ appeared $1.3\times$ faster
than MDM on gisette, a tie in isolation; only ratios whose intervals
separate clearly should be read as results.

\begin{table}[h]\centering\scriptsize\setlength{\tabcolsep}{3pt}
\caption{$L_2$ soft-margin battery: five seeds, 600\,s per solver for
ETA and our SMO (our own implementation; LIBSVM would need an $n\times n$
precomputed kernel), LIBLINEAR
(primal Newton solver, Appendix~\ref{app:details}) at tolerance
$10^{-6}$ with no time cap; released code with a full SMO row cache, up to four
single-threaded cells at a time on the AMD EPYC 9754 machine, except
covtype (below). Mean $\pm$ 95\% CI, which on the fixed-split datasets
(all but covtype) measures timing noise only (w8a, $C{=}0.1$: one
LIBLINEAR seed took $71$\,s, the other four $12$--$14$\,s). ``cap'' = stopped at the time or iteration cap in every seed; $^\ast$ four of five seeds converged within
the cap, the mean counting the capped seed at $600$\,s. ETA's certified relative
gap is reported at its stopping point (mean [min, max]); accuracies are
those of the returned iterate. covtype: ETA and SMO from the September
run on the earlier machine with SMO's 512-row cache, because a full
cache needs about $80$\,GB (Appendix~\ref{app:details}); LIBLINEAR on an
AMD EPYC 7B13, whose objective values match the September run's to seven or more
significant digits.}\label{tab:l2}
\begin{tabular}{llrrrll}
\toprule
dataset ($n+m$ / $d$) & $C$ & ETA (s) & SMO (s) & LIBLINEAR (s) & ETA gap at stop & acc.\ ETA / SMO / LIBLINEAR \\
\midrule
gisette (6,000 / 5,000) & 0.1 & $19.3\pm2.4$ & $16.7\pm2.6$ & $261\pm4$ & $9\times10^{-4}$ & .981 / .981 / .981 \\
 & 1 & $20.6\pm3.5$ & $15.9\pm3.3$ & $1{,}893\pm41$ & $10\times10^{-4}$ & .981 / .981 / .981 \\
 & 10 & $20.9\pm1.4$ & $19.6\pm1.1$ & $4{,}380\pm277$ & $10\times10^{-4}$ & .981 / .981 / .978 \\
\midrule
a9a (32,561 / 123) & 0.1 & $583\pm17^{\ast}$ & $163\pm4$ & $1.9\pm0.1$ & $(9.9$--$11)\times10^{-4}$ & .849 / .849 / .849 \\
 & 1 & cap & cap & $3.2\pm0.8$ & 0.66 [0.55, 0.85] & .840 / .850 / .849 \\
 & 10 & cap & cap & $1.1\pm0.1$ & 1.2 [1.2, 1.2] & .594 / .786 / .849 \\
ijcnn1 (49,990 / 22) & 0.1 & $336\pm61$ & $59.0\pm7.4$ & $0.1\pm0.0$ & $9\times10^{-4}$ & .918 / .918 / .918 \\
 & 1 & cap & $334\pm11$ & $0.2\pm0.1$ & 0.17 [0.16, 0.18] & .920 / .918 / .918 \\
 & 10 & cap & cap & $0.2\pm0.0$ & 1.1 [0.99, 1.2] & .649 / .917 / .918 \\
w8a (49,749 / 300) & 0.1 & cap & $470\pm50$ & $24.4\pm32.2$ & 0.044 [0.016, 0.12] & .987 / .987 / .987 \\
 & 1 & cap & cap & $6.8\pm1.6$ & 0.86 [0.84, 0.89] & .979 / .987 / .987 \\
 & 10 & cap & cap & $8.5\pm1.5$ & 1.4 [1.4, 1.5] & .975 / .977 / .987 \\
covtype (100,000 / 54) & 0.1 & cap & cap & $12.8\pm11.1$ & 1.0 [1.0, 1.0] & .572 / .701 / .756 \\
 & 1 & cap & cap & $3.2\pm1.5$ & 1.1 [1.0, 1.1] & .537 / .545 / .756 \\
 & 10 & cap & cap & $2.5\pm0.9$ & 1.3 [1.3, 1.6] & .502 / .515 / .756 \\
\bottomrule
\end{tabular}
\end{table}

\paragraph{Reference solver in the KL2 cells.} The hinge-loss
\texttt{SVC(rbf, C=1)}, a different objective, took $4.5\pm0.4$ (a9a),
$3.1\pm0.3$ (covtype), $75.1\pm4.8$ (gisette), $0.9\pm0.1$ (ijcnn1) and
$2.2\pm0.7$\,s (w8a), with test accuracies $0.789$, $0.783$, $0.977$,
$0.950$ and $0.975$, within one point of the $L_2$ solutions and on
either side of them.

\paragraph{Step ablation, blocks and shrinking (synthetic).} Iteration
counts at $\varepsilon=10^{-6}$ (single instances): $d{=}30$, $n{=}200$:
toward $158{,}941$, midpoint $148{,}996$, away $106$, pairwise $181$;
$d{=}100$, $n{=}1{,}000$: toward and midpoint fail within $300{,}000$,
away $15{,}886$, pairwise $3{,}921$. At the working tolerance $10^{-3}$
the pairwise/away gain over the midpoint heuristic is only $\sim\!11\%$,
which is why the heuristic sufficed in \cite{gupta2026}. Block transfers cut iterations $3$--$25\times$ relative to single MDM
across $k=4$--$32$ ($7$--$19\times$ at $k{=}16$; e.g.\
$43{,}486\to3{,}001$ at $k{=}16$ on a dense-support $L_2$ instance at
$d{=}1{,}000$)
and are the only configuration that reaches the certified tolerance on
MNIST odd-vs-even. Figure~\ref{fig:shrink} shows the screening
speed-ups and Figure~\ref{fig:final} the consolidated comparison,
five trials per class (mean $\pm$ 95\% CI): on the four hard-margin
classes the optimised solver ranges from $1.5\times$ faster than LIBSVM
to $1.1\times$ slower ($0.47\pm0.06$ vs $0.69\pm0.15$\,s at
$d{=}1{,}000$, the only separated intervals; $14.1\pm1.5$ vs
$13.2\pm1.9$\,s at $d{=}10{,}000$; $0.59\pm0.11$ vs $0.63\pm0.15$\,s at
$\varepsilon{=}10^{-5}$, where the paper-style configuration fails to
converge in every trial; RBF $1.64\pm0.42$ vs $1.94\pm0.34$\,s); on the
synthetic $L_2$ margin ($d{=}1{,}000$, $C{=}1$) it ties LIBLINEAR
($13.7\pm11.0$ vs $14.5\pm5.0$\,s); on MNIST odd-vs-even it is
$2.1\times$ slower ($135\pm17$ vs $63\pm13$\,s; single MDM never
converges) and on the $\nu$-SVM $1.4\times$ slower ($41.8\pm7.9$ vs
NuSVC's $30.9\pm5.7$\,s). Distances agree within tolerance on every
row.

\begin{table}[h]\centering\scriptsize
\caption{Two-sided schedules on the synthetic instances of this appendix (September run) ($n{=}5{,}000$ per class, $k{=}16$): the unconditional $V$-then-$W$ order of the submitted version versus the schedule of Lemma~\ref{lem:twosided} (larger-gap side first, second side skipped after a drop). Drop steps and skipped second-side steps are counted over the run.}\label{tab:sched}
\begin{tabular}{lllrrrr}\toprule
instance & schedule & time (s) & iterations & drops & skips & distance \\\midrule
hard margin, $d{=}1000$, $\varepsilon{=}10^{-5}$ & unconditional & 0.8 & 16 & 0 & 0 & 55.339365 \\
hard margin, $d{=}1000$, $\varepsilon{=}10^{-5}$ & Lemma~\ref{lem:twosided} & 0.7 & 16 & 0 & 0 & 55.339365 \\
\midrule
$L_2$ overlap, $d{=}100$, $C{=}1$ & unconditional & 23.1 & 12,311 & 785 & 0 & 0.078400 \\
$L_2$ overlap, $d{=}100$, $C{=}1$ & Lemma~\ref{lem:twosided} & 21.5 & 10,701 & 599 & 316 & 0.078400 \\
\midrule
$L_2$ overlap, $d{=}1000$, $C{=}1$ & unconditional & 14.2 & 2,901 & 182 & 0 & 0.578143 \\
$L_2$ overlap, $d{=}1000$, $C{=}1$ & Lemma~\ref{lem:twosided} & 14.9 & 3,001 & 180 & 121 & 0.578143 \\
\bottomrule
\end{tabular}\end{table}

\begin{table}[h]\centering\scriptsize
\caption{LIBLINEAR's two solvers across tolerances on gisette and
a9a (\texttt{src/liblin\_sweep.py}; five seeds, three processes at a
time and nothing else running, AMD EPYC 7B13). Newton: \texttt{LinearSVC}
with \texttt{dual=\textquotesingle auto\textquotesingle}, which selects
the primal trust-region Newton solver, the one the batteries ran; dual:
\texttt{dual=True}, dual coordinate descent. Primal objective relative
to the best known value $P^*$ ($0.14835$ and $0.14893$ on gisette at
$C{=}0.1$ and $1$, ETA's certified values; $687.07$ and $6{,}869.7$ on
a9a); on these fixed splits every seed returns the same primal and
accuracy. On a9a at $C{=}1$ the dual solver hit its $2\times10^5$
iteration cap in four of five seeds at $10^{-6}$ and in all five at
$10^{-8}$, so those times are lower bounds.}\label{tab:libdef}
\begin{tabular}{lrrrlrlc}\toprule
dataset & $C$ & tol & Newton (s) & Newton primal & dual (s) & dual primal & acc.\ Newton / dual \\\midrule
gisette & 0.1 & $10^{-4}$ & $40.1\pm0.5$ & 0.15345 ($+3.4\%$) & $3.0\pm0.1$ & 0.14835 ($+0.0\%$) & .978 / .981 \\
 & & $3\times10^{-5}$ & $85.7\pm0.8$ & 0.14922 ($+0.6\%$) & $3.3\pm0.2$ & 0.14835 ($+0.0\%$) & .980 / .981 \\
 & & $10^{-5}$ & $151\pm2$ & 0.14842 ($+0.05\%$) & $3.7\pm0.6$ & 0.14835 ($+0.0\%$) & .981 / .981 \\
 & & $10^{-6}$ & $269\pm2$ & 0.14835 ($+0.0\%$) & $4.5\pm0.7$ & 0.14835 ($+0.0\%$) & .981 / .981 \\
gisette & 1 & $10^{-4}$ & $3.0\pm0.2$ & 0.18440 ($+24\%$) & $2.9\pm0.2$ & 0.14893 ($+0.0\%$) & .974 / .981 \\
 & & $3\times10^{-5}$ & $4.0\pm0.3$ & 0.18343 ($+23\%$) & $3.2\pm0.1$ & 0.14893 ($+0.0\%$) & .974 / .981 \\
 & & $10^{-5}$ & $398\pm18$ & 0.15568 ($+4.5\%$) & $3.5\pm0.1$ & 0.14893 ($+0.0\%$) & .977 / .981 \\
 & & $10^{-6}$ & $1{,}813\pm67$ & 0.14900 ($+0.05\%$) & $4.5\pm0.7$ & 0.14893 ($+0.0\%$) & .981 / .981 \\
\midrule
a9a & 0.1 & $10^{-4}$ & $0.5\pm0.1$ & 687.08 ($+0.001\%$) & $131\pm38$ & 687.07 ($+0.0\%$) & .849 / .849 \\
 & & $10^{-6}$ & $1.8\pm0.4$ & 687.07 ($+0.0\%$) & $180\pm44$ & 687.07 ($+0.0\%$) & .849 / .849 \\
 & & $10^{-8}$ & $2.4\pm0.5$ & 687.07 ($+0.0\%$) & $226\pm50$ & 687.07 ($+0.0\%$) & .849 / .849 \\
a9a & 1 & $10^{-4}$ & $0.6\pm0.1$ & 6{,}869.7 ($+0.001\%$) & $1{,}291\pm416$ & 6{,}869.7 ($+0.0\%$) & .850 / .849 \\
 & & $10^{-6}$ & $2.5\pm0.3$ & 6{,}869.7 ($+0.0\%$) & $1{,}828\pm271$ & 6{,}869.7 ($+0.0\%$) & .849 / .849 \\
 & & $10^{-8}$ & $5.9\pm0.8$ & 6{,}869.7 ($+0.0\%$) & $1{,}852\pm249$ & 6{,}869.7 ($+0.0\%$) & .849 / .849 \\
\bottomrule
\end{tabular}\end{table}

\begin{table}[h]\centering\scriptsize
\caption{Time to $1\%$ of the optimum, seed-0 traces. The ETA columns
measure \emph{dual} progress and LIBLINEAR's column \emph{primal}
progress, so this is not a matched primal comparison (the primal of
ETA's returned $(w,b)$ is not recorded in the traces). For ETA the
certified dual value $2/\UB_t^2$ increases to the optimum,
so $s_t=1-(2/\UB_t^2)/P^*$ with $P^*$ the best known primal (the lower
of LIBLINEAR's best value and ETA's certified value); ``certified
$1\%$'' is the first time the certificate itself proves
$1-\LB_t^2/\UB_t^2\le1\%$, and ``cert.\ $10^{-3}$'' the time the stopping
rule $(\UB-\LB)/\UB\le10^{-3}$ is met. ETA's a9a, w8a and ijcnn1 times are from
the uncapped runs of Figure~\ref{fig:trace} (\texttt{src/time\_to\_tol.py});
its gisette times are from the battery traces of Table~\ref{tab:l2}.
Machines: ETA's gisette entries and LIBLINEAR's upper bounds come from
the EPYC 9754, ETA's a9a, w8a and ijcnn1 entries and LIBLINEAR's swept
times from the EPYC 7B13.
LIBLINEAR: the mean time of the coarsest tolerance of Table~\ref{tab:libdef}
whose primal is within $1\%$ of $P^*$ in every seed, for the Newton and
the dual solver; where the sweep did not run, the matched $10^{-6}$
Newton run of Table~\ref{tab:l2}, an upper bound (on gisette at
$C{=}10$ that run stops $4.3\%$ above $P^*$). On the remaining cells
ETA is not within $1\%$ at the 600\,s cap ($s=3$--$89\%$).}\label{tab:t1}
\setlength{\tabcolsep}{3.5pt}
\begin{tabular}{llrrrrrl}\toprule
dataset & $C$ & $P^*$ & ETA $1\%$ & ETA $0.1\%$ & cert.\ $1\%$ & cert.\ $10^{-3}$ & LIBLINEAR $1\%$, Newton / dual \\\midrule
gisette & 0.1 & 0.14835 & 17 s & 18 s & 19 s & 19 s & 86 s / 3.0 s \\
gisette & 1 & 0.14893 & 20 s & 21 s & 21 s & 22 s & 1,813 s / 2.9 s \\
gisette & 10 & 0.14899 & 20 s & 21 s & 22 s & 22 s & none at $10^{-6}$ / not run \\
\midrule
a9a & 0.1 & 687.07 & 218 s & 295 s & 402 s & 510 s & 0.5 s / 131 s \\
w8a & 0.1 & 108.80 & 435 s & 559 s & 670 s & 792 s & $\le24$ s / not run \\
ijcnn1 & 0.1 & 550.61 & 80 s & 125 s & 189 s & 259 s & $\le0.14$ s / not run \\
\bottomrule\end{tabular}\end{table}

\begin{table}[h]\centering\scriptsize\setlength{\tabcolsep}{3.5pt}
\caption{Step ablation on real data with a time split, isolated runs
(\texttt{src/step\_profile.py}: three seeds, each configuration alone
in its own process on an otherwise idle AMD EPYC 7B13, with the final
code): the gisette $L_2$ margin ($C{=}1$, $d{=}5{,}000$) and the four
kernel $L_2$ cells of Table~\ref{tab:regime}. MDM is plain pairwise FW
(single transfers, no guard, no fallback) with the prioritised receiver
search and schedule of the other rows; ``exact'' rows replace that
search by an exact linear minimisation over all points; AFW is
away-step FW with exact search; guarded $k{=}1$ is
Algorithm~\ref{alg:block} with a single pair (guard and case-(b)
fallback active); BPCG is the blended pairwise step of \cite{tsuji2022}
on the side supports (Table~\ref{tab:bpcg}). Split of the ETA time: scan = the score scan and
pair selection; col = Gram or kernel column computation; step = the
step itself (block assembly, guard, line search, cache and score
update); rest = certificate and bounds. Drops = away fallbacks that
removed a support index. Every run converged to the certified gap,
distances agreeing to that tolerance and test accuracies to
$2\times10^{-4}$ per dataset and seed, except AFW on the ijcnn1, a9a and
covtype kernel cells, which timed out at $600$\,s with certified gaps of
$0.0012$--$0.004$ (iterations and columns at the cap shown).}\label{tab:kreal}
\begin{tabular}{llrrrrrrrrr}\toprule
instance & variant & $k$ & time (s) & scan & col & step & rest & iterations & columns & drops \\\midrule
gisette $L_2$ & MDM & 1 & $12.1\pm0.2$ & 0.2 & 10.2 & 0.8 & 0.9 & 4,946 & 1,300 & 0 \\
gisette $L_2$ & MDM, exact & 1 & $12.0\pm0.2$ & 0.1 & 10.6 & 0.7 & 0.7 & 3,546 & 1,341 & 0 \\
gisette $L_2$ & guarded & 1 & $12.8\pm0.1$ & 0.2 & 10.5 & 1.5 & 0.6 & 3,471 & 1,334 & 82 \\
gisette $L_2$ & guarded, exact & 1 & $12.9\pm0.4$ & 0.1 & 10.7 & 1.5 & 0.7 & 3,416 & 1,341 & 87 \\
gisette $L_2$ & AFW, exact & 1 & $206.7\pm5.3$ & 7.6 & 10.8 & 90.9 & 97.3 & 527,108 & 1,340 & 0 \\
gisette $L_2$ & block & 16 & $13.0\pm0.1$ & 0.1 & 11.6 & 1.2 & 0.2 & 831 & 1,480 & 301 \\
gisette $L_2$ & BPCG & --- & $12.1\pm0.7$ &  & & & & 7,930 & 1,300 & 75 \\
\midrule
ijcnn1 kernel $L_2$ & MDM & 1 & $7.8\pm0.5$ & 1.2 & 0.8 & 2.2 & 3.6 & 12,763 & 4,523 & 0 \\
ijcnn1 kernel $L_2$ & MDM, exact & 1 & $4.4\pm0.2$ & 0.2 & 0.8 & 1.2 & 2.3 & 5,528 & 4,519 & 0 \\
ijcnn1 kernel $L_2$ & guarded & 1 & $8.1\pm0.4$ & 0.8 & 0.7 & 4.1 & 2.6 & 6,118 & 4,514 & 14 \\
ijcnn1 kernel $L_2$ & guarded, exact & 1 & $7.4\pm0.5$ & 0.1 & 0.7 & 3.9 & 2.6 & 6,070 & 4,520 & 14 \\
ijcnn1 kernel $L_2$ & AFW, exact & 1 & $>600$ & 11.8 & 0.7 & 254.1 & 333.5 & 658,667 & 4,504 & 0 \\
ijcnn1 kernel $L_2$ & block & 16 & $3.8\pm0.3$ & 0.2 & 0.6 & 2.3 & 0.7 & 1,396 & 4,590 & 98 \\
ijcnn1 kernel $L_2$ & BPCG & --- & $4.4\pm0.4$ &  & & & & 12,460 & 4,515 & 21 \\
\midrule
w8a kernel $L_2$ & MDM & 1 & $6.6\pm0.7$ & 0.9 & 1.6 & 1.8 & 2.3 & 11,491 & 3,588 & 0 \\
w8a kernel $L_2$ & MDM, exact & 1 & $3.4\pm0.2$ & 0.1 & 1.4 & 0.8 & 1.1 & 4,307 & 3,585 & 0 \\
w8a kernel $L_2$ & guarded & 1 & $5.2\pm0.5$ & 0.4 & 1.4 & 2.4 & 1.1 & 4,243 & 3,581 & 8 \\
w8a kernel $L_2$ & guarded, exact & 1 & $5.0\pm0.4$ & 0.1 & 1.4 & 2.3 & 1.1 & 4,248 & 3,586 & 11 \\
w8a kernel $L_2$ & AFW, exact & 1 & $225.7\pm3.4$ & 5.3 & 1.5 & 107.1 & 111.7 & 350,050 & 3,569 & 0 \\
w8a kernel $L_2$ & block & 16 & $2.5\pm0.2$ & 0.1 & 1.3 & 0.9 & 0.2 & 603 & 3,591 & 50 \\
w8a kernel $L_2$ & BPCG & --- & $3.4\pm0.2$ &  & & & & 8,909 & 3,581 & 13 \\
\midrule
a9a kernel $L_2$ & MDM & 1 & $23.3\pm0.3$ & 3.1 & 3.1 & 5.8 & 11.3 & 22,879 & 7,609 & 0 \\
a9a kernel $L_2$ & MDM, exact & 1 & $12.8\pm0.1$ & 0.3 & 2.8 & 3.1 & 6.6 & 9,616 & 7,608 & 0 \\
a9a kernel $L_2$ & guarded & 1 & $19.5\pm1.1$ & 1.7 & 2.7 & 8.5 & 6.7 & 9,879 & 7,611 & 27 \\
a9a kernel $L_2$ & guarded, exact & 1 & $18.0\pm0.4$ & 0.2 & 2.6 & 8.3 & 6.8 & 9,852 & 7,607 & 22 \\
a9a kernel $L_2$ & AFW, exact & 1 & $>600$ & 7.4 & 3.0 & 245.6 & 344.2 & 412,530 & 7,543 & 0 \\
a9a kernel $L_2$ & block & 16 & $6.9\pm0.6$ & 0.3 & 2.4 & 3.0 & 1.2 & 1,489 & 7,616 & 35 \\
a9a kernel $L_2$ & BPCG & --- & $10.9\pm0.2$ &  & & & & 18,995 & 7,603 & 24 \\
\midrule
covtype kernel $L_2$ & MDM & 1 & $29.3\pm1.7$ & 3.9 & 2.2 & 7.5 & 15.7 & 26,748 & 9,067 & 0 \\
covtype kernel $L_2$ & MDM, exact & 1 & $16.2\pm0.4$ & 0.3 & 2.0 & 4.3 & 9.6 & 11,789 & 9,066 & 0 \\
covtype kernel $L_2$ & guarded & 1 & $26.2\pm1.8$ & 2.3 & 1.8 & 11.9 & 10.2 & 12,544 & 9,069 & 12 \\
covtype kernel $L_2$ & guarded, exact & 1 & $24.6\pm1.3$ & 0.3 & 1.9 & 11.9 & 10.6 & 12,602 & 9,068 & 14 \\
covtype kernel $L_2$ & AFW, exact & 1 & $>600$ & 6.4 & 2.0 & 243.8 & 348.0 & 354,002 & 9,026 & 0 \\
covtype kernel $L_2$ & block & 16 & $12.2\pm0.2$ & 0.7 & 1.6 & 6.7 & 3.1 & 3,336 & 9,074 & 26 \\
covtype kernel $L_2$ & BPCG & --- & $14.6\pm2.7$ &  & & & & 24,749 & 9,066 & 15 \\
\bottomrule\end{tabular}\end{table}

With the prioritised receiver search the guarded single pair halves
MDM's iterations on ijcnn1 ($1.4\times$ on gisette), taking drop steps
on $0.1$--$2.4\%$ of its iterations; an exact receiver search cuts
iterations as much and, on the kernel cells, time by $1.8$--$1.9\times$,
and with exact search the guard changes iterations by under $10\%$
either way, so that gain is the search's, not the guard's. Blocks then cut
iterations a further $3.8$--$7.0\times$. Away fallbacks that drop an
index occur on $0.1$--$36\%$ of guarded iterations, and their share
grows with $k$: larger blocks add more indices per iteration, and the
drops remove them. The time split says where fewer iterations buy time.
On gisette, $82$--$89\%$ of every configuration's time except
away-step FW's is Gram-column computation ($O(nd)$ per new receiver,
$d{=}5{,}000$), so all but away-step FW lie within $11\%$ of one another at $12$--$13$\,s.
Iteration, column and drop counts equal those of the September runs for
every configuration both contain except plain MDM, whose prioritised
search varies between runs (up to $5\%$ across seeds, under $0.3\%$
between the two runs' means). Peak resident memory is $0.74$\,GB on
gisette and, as per-cell means on the kernel cells, $0.49$--$0.93$\,GB
for ETA and $0.67$--$1.27$\,GB for BPCG. On the kernel cells the columns are cheap and the rest
is per-iteration $O(n)$ work: the score update and line search inside
the step, the certificate in the remainder, and least of all the scan
itself. That work falls with the iteration count, though not in
proportion, because a block step updates the scores with up to $k'$
columns; the net effect on ijcnn1 is $2.1\times$ over MDM, $1.2\times$
over exact-search MDM and $1.2\times$ over BPCG. Away-step FW with exact
search needs $30$--$110\times$ more iterations than plain MDM where it
converges and times out on three kernel cells.

\paragraph{How much blocks help, and choosing $k$.} For uncapped pairs
$S=\sum_jb_j^2$, so $\hat\eta=(D/S-1)/(k'-1)$ is the effective pair
correlation of Proposition~\ref{prop:orth}, and the guard computes it at
no cost. Instrumented seed-0 runs on the five real cells of
Table~\ref{tab:kreal} (\texttt{src/block\_diagnostics.py}, September,
one process at a time, $k\in\{1,4,16,32\}$; $\hat\eta$ is the median over
$k\in\{4,16,32\}$ of each run's median) give $\hat\eta=0.07$
(gisette), $0.11$ (a9a), $0.17$ (w8a), $0.26$ (ijcnn1) and $0.29$
(covtype). The iteration cut of $k{=}16$ over plain MDM is larger at
low correlation ($15\times$ on a9a, $19\times$ on w8a) than at high
($9\times$ on ijcnn1, $8\times$ on covtype), and only $6\times$ on
gisette despite its low correlation, because drop steps take $36\%$ of
its block iterations. By Proposition~\ref{prop:orth}, a 16-pair block of equal pair gains at
these correlations corresponds to $3$--$8$ transfers' worth, against $16$ for orthogonal pairs. On the
synthetic instances the cut grows with $d$ ($6.8\times$ at $d{=}100$,
$14.5\times$ at $d{=}1{,}000$, Table~\ref{tab:kabl}), as expected if
random difference directions become more orthogonal in higher
dimension; we did not measure $\hat\eta$ there. If an iteration costs $a+bk$ (scan plus per-pair
work) and a block is worth $k/(1+\eta(k-1))$ transfers, the cost per
transfer is minimised at $k^*=\sqrt{(a/b)(1-\eta)/\eta}$; with $a$ and
$b$ fitted per cell and $\hat\eta$ for $\eta$ this gives $k^*=7$--$14$,
to be capped by the support size and the column cache. In the September timings the best $k$ was $4$ on gisette and ijcnn1 and
$16$ on the other three cells, each within $1.3\times$ of $k{=}16$; on
gisette, where columns took $93\%$ of the time, no $k$ helped by more
than $1.1\times$.

% generated by src/bpcg_table.py --opt from results/step_profile5.json and results/step_profile_bpcg2.json; do not edit
\begin{table}[htbp]\centering\scriptsize
\caption{BPCG \cite{tsuji2022} versus block ETA ($k{=}16$) on the same objective with the same certificate ($\varepsilon{=}10^{-3}$), the same column cache and the same 600\,s cap ($L_2$ margins at $C{=}1$), three seeds, each configuration alone in a fresh process on the AMD EPYC 7B13 (\texttt{src/step\_profile.py}). ``product'': BPCG on the $nm$ product atoms $v_i-w_j$ (\texttt{src/bpcg.py}); ``side'': the same blended pairwise rule on the side supports ETA uses (\texttt{src/corrective\_fw.py}). Both BPCG rows were run after a fix to their kernel-column routine, which had recomputed a class's squared norms for every RBF column ($3.5\times$ ETA's column cost; columns are bit-identical). Every run converged; distances agree per cell and seed to $10^{-6}$ and test accuracies to $2\times10^{-4}$. Columns are kernel or Gram columns computed; drops count ETA's away drops and BPCG's capped local steps.}\label{tab:bpcg}
\begin{tabular}{llrrrr}\toprule
cell & solver & time (s) & iterations & columns & drops \\\midrule
gisette $L_2$ & ETA, $k{=}16$ & $13.0\pm0.1$ & 831 & 1,480 & 301 \\
gisette $L_2$ & BPCG, product & $13.9\pm0.8$ & 10,470 & 1,322 & 87 \\
gisette $L_2$ & BPCG, side & $12.1\pm0.7$ & 7,930 & 1,300 & 75 \\
\midrule
gisette KL2 & ETA, $k{=}16$ & $35.1\pm6.0$ & 251 & 3,653 & 4 \\
gisette KL2 & BPCG, product & $39.7\pm3.4$ & 8,702 & 3,641 & 3 \\
gisette KL2 & BPCG, side & $36.2\pm4.0$ & 4,375 & 3,643 & 3 \\
\midrule
w8a KL2 & ETA, $k{=}16$ & $2.5\pm0.2$ & 603 & 3,591 & 50 \\
w8a KL2 & BPCG, product & $10.9\pm1.3$ & 12,342 & 3,577 & 12 \\
w8a KL2 & BPCG, side & $3.4\pm0.2$ & 8,909 & 3,581 & 13 \\
\midrule
a9a KL2 & ETA, $k{=}16$ & $6.9\pm0.6$ & 1,489 & 7,616 & 35 \\
a9a KL2 & BPCG, product & $46.6\pm1.3$ & 28,653 & 7,605 & 24 \\
a9a KL2 & BPCG, side & $10.9\pm0.2$ & 18,995 & 7,603 & 24 \\
\midrule
covtype KL2 & ETA, $k{=}16$ & $12.2\pm0.2$ & 3,336 & 9,074 & 26 \\
covtype KL2 & BPCG, product & $69.3\pm3.6$ & 38,422 & 9,065 & 15 \\
covtype KL2 & BPCG, side & $14.6\pm2.7$ & 24,749 & 9,066 & 15 \\
\midrule
ijcnn1 KL2 & ETA, $k{=}16$ & $3.8\pm0.3$ & 1,396 & 4,590 & 98 \\
ijcnn1 KL2 & BPCG, product & $16.2\pm1.9$ & 18,034 & 4,512 & 19 \\
ijcnn1 KL2 & BPCG, side & $4.4\pm0.4$ & 12,460 & 4,515 & 21 \\
\bottomrule
\end{tabular}\end{table}

Both BPCGs use the separable linear minimisation oracle of
Lemma~\ref{lem:sep}, exact line searches, the certificate of
Lemma~\ref{lem:gapid} and the same cached columns, so column counts are
comparable. On the product atoms BPCG is $4.3$--$6.8\times$ slower than
block ETA on the four non-gisette kernel cells, but most of that is the
parametrisation: on the side supports ETA uses, the same rule needs
$7$--$17\times$ more iterations than block ETA and is $1.2$--$1.6\times$
slower there (separated intervals on w8a and a9a only), ties on gisette's
kernel margin and is $8\%$ faster on its $L_2$ margin, where columns
dominate. The block's fewer iterations buy time where per-iteration work
dominates and nothing where columns do.

\begin{table}[h]\centering\scriptsize
\caption{Block-size ablation on the synthetic instances (released code, September run): single MDM ($k{=}1$) versus blocks of $k$ transfers. Iterations fall with $k$; wall-clock need not:
blocks touch more columns, and on the $d{=}100$ $L_2$ instance they add
receivers ($1{,}345$ to $3{,}076$ columns) and single MDM is fastest.}\label{tab:kabl}
\begin{tabular}{lrrrrr}\toprule
instance & $k$ & time (s) & iterations & columns & drops \\\midrule
hard margin, $d{=}1000$, $\varepsilon{=}10^{-5}$ & 1 & 0.7 & 301 & 104 & 0 \\
hard margin, $d{=}1000$, $\varepsilon{=}10^{-5}$ & 4 & 0.7 & 36 & 103 & 0 \\
hard margin, $d{=}1000$, $\varepsilon{=}10^{-5}$ & 16 & 0.7 & 16 & 103 & 0 \\
hard margin, $d{=}1000$, $\varepsilon{=}10^{-5}$ & 32 & 0.7 & 16 & 109 & 0 \\
\midrule
$L_2$ overlap, $d{=}100$, $C{=}1$ & 1 & 17.4 & 72,546 & 1,345 & 0 \\
$L_2$ overlap, $d{=}100$, $C{=}1$ & 4 & 23.1 & 24,606 & 1,632 & 416 \\
$L_2$ overlap, $d{=}100$, $C{=}1$ & 16 & 21.7 & 10,701 & 2,404 & 599 \\
$L_2$ overlap, $d{=}100$, $C{=}1$ & 32 & 25.8 & 7,606 & 3,076 & 305 \\
\midrule
$L_2$ overlap, $d{=}1000$, $C{=}1$ & 1 & 23.4 & 43,486 & 1,242 & 0 \\
$L_2$ overlap, $d{=}1000$, $C{=}1$ & 4 & 18.5 & 10,071 & 1,330 & 177 \\
$L_2$ overlap, $d{=}1000$, $C{=}1$ & 16 & 15.1 & 3,001 & 1,348 & 180 \\
$L_2$ overlap, $d{=}1000$, $C{=}1$ & 32 & 15.1 & 1,741 & 1,396 & 170 \\
\bottomrule
\end{tabular}\end{table}

\begin{figure}[h]\centering
\includegraphics[width=.7\linewidth]{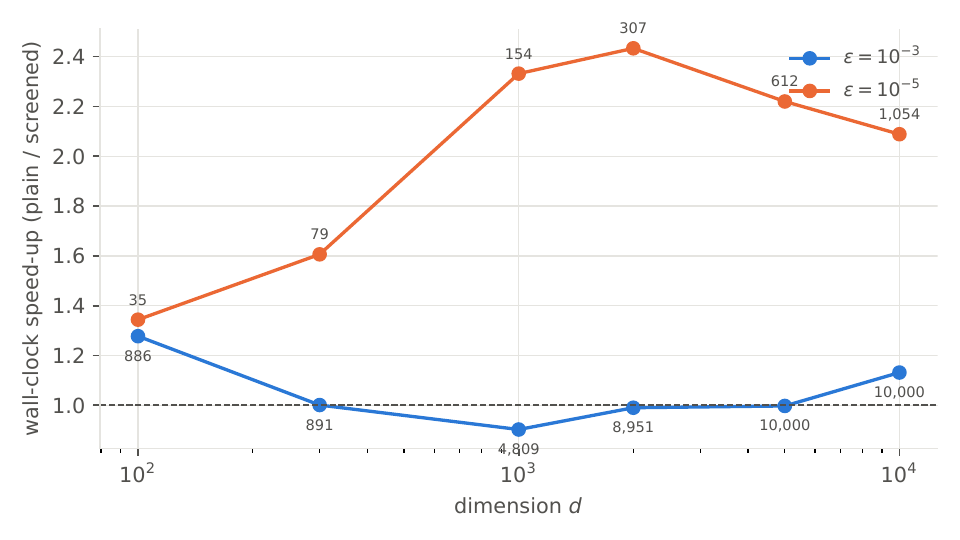}
\caption{Safe screening: wall-clock speed-up vs dimension at two
tolerances on the separated synthetic protocol; labels are surviving
points of $10{,}000$. Made with the toward-step solver of the
development phase (\texttt{src/shrink\_benchmark.py}); the rule and its
activation do not depend on the step.}\label{fig:shrink}
\end{figure}

\begin{figure}[h]\centering
\includegraphics[width=.78\linewidth]{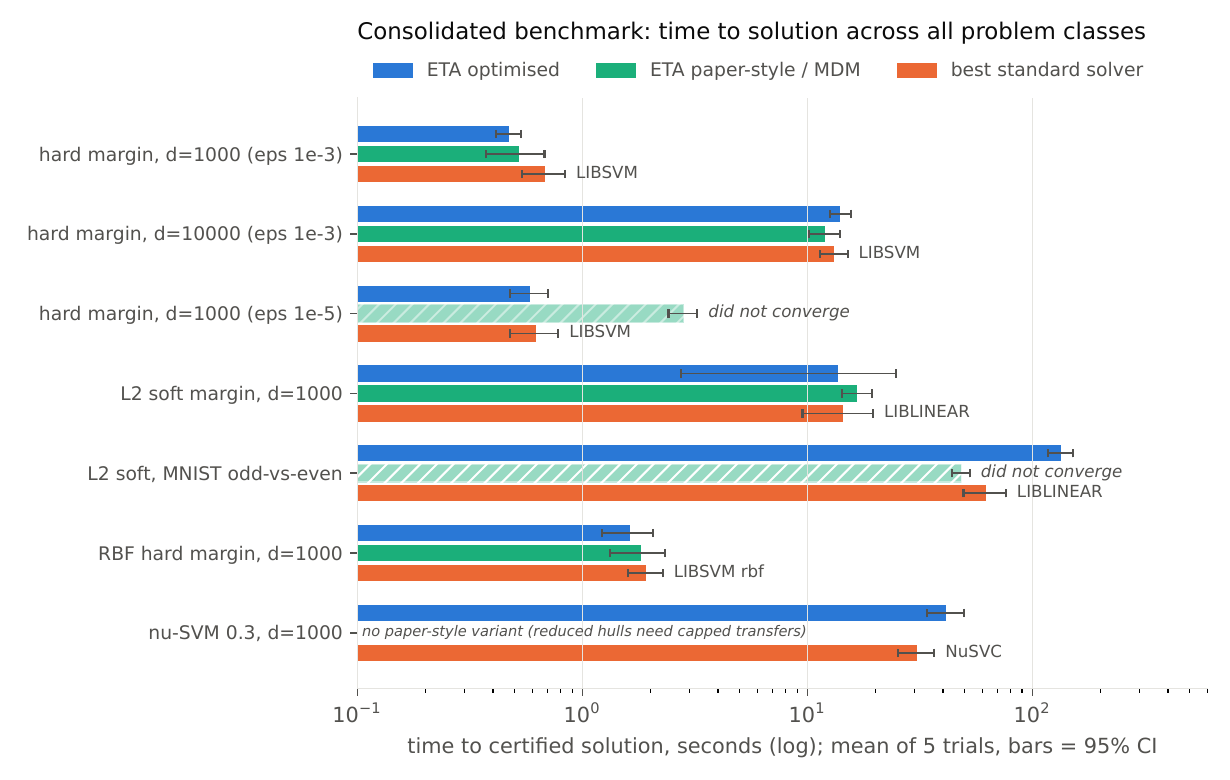}
\caption{Time to certified solution on seven problem classes: optimised
ETA ($k{=}16$ with screening on the hard-margin classes, $k{=}32$ on the
$L_2$ class, $k{=}8$ on MNIST) vs the comparison configuration (the
toward-step algorithm of \cite{gupta2026} on the hard-margin classes,
single MDM on the $L_2$ and MNIST classes, pairwise K-ETA on RBF) vs the
best standard solver per class; five trials per class (October rerun, AMD EPYC 9754), mean with
95\% CI. The $\nu$-SVM row has no comparison bar because the toward-step
algorithm of \cite{gupta2026} has no reduced-hull variant (reduced
hulls need capped transfers).}\label{fig:final}
\end{figure}

\section{Extended related work}\label{app:related}
\paragraph{Aggregated directions.} Boosted FW \cite{combettes2020}
builds its direction by greedily aligning a combination of atoms with
the negative gradient, at several oracle calls per iteration, and
inherits the FW rate through an alignment condition; our block fixes the
pair coefficients $\gamma_j$ from one scan and relies on the guard, not
on alignment, for its rate.
\paragraph{Swap steps.} Blended pairwise conditional gradients
\cite{tsuji2022} also remove pairwise FW's swap-step bound with a guard,
there between the pairwise step and a FW step, and \cite{rinaldi2023}
avoid bad steps with a short-step chain. Our guard compares an
aggregated block of $k$ transfers with the single MDM step and falls
back to an away step, so the drop-step count of away-step FW applies
unchanged. Block-coordinate FW \cite{lacoste2013bcfw} blocks over
\emph{coordinates} of a product domain rather than aggregating pairs on
one polytope.
\paragraph{Screening and identification.} Gap-safe rules
\cite{fercoq2015,ndiaye2017,shibagaki2016} screen with a radius from
the duality gap and identify the support in finite time; the active-set
complexity of away-step FW \cite{bomze2020} and the
strict-complementarity bounds of \cite{garber2020} give the same
identification from the algorithm's side. Theorem~\ref{thm:activation}
is the polytope-distance instance, with constants tied to the TA
certificate; the SVM rules of \cite{elghaoui2012,ogawa2013} and the
exact data reduction of \cite{wang2014} are its ancestors.
\paragraph{Relation to other FW variants.} The toward, away and pairwise counts of the synthetic step ablation and
the MDM, AFW and block rows of Table~\ref{tab:kreal} are Frank--Wolfe,
away-step FW, pairwise FW and our block step on the same
polytope and objective; blended pairwise conditional gradients (BPCG)
\cite{tsuji2022} is compared on real data in Table~\ref{tab:bpcg};
Wolfe's minimum-norm-point method \cite{wolfe1976} was not run. BPCG's
blending rule compares the away
and FW gaps and its linear rate has the same pyramidal-width form as
Theorem~\ref{thm:linear}; our case (b) instead drops when the away step is clipped and
otherwise compares the realised gains of the block, MDM, away and
toward candidates, so each non-drop step is at least as good as the
best single candidate, and it is the away
fallback rather than the blending that yields the drop-step count. The
facial distance of \cite{pena2019} is a computable alternative to the
pyramidal width; a bound on $\delta$ for the difference atom set
$A=\{v_i-w_j\}$ in terms of the two class hulls is open.
\paragraph{Synthetic results in brief.} Away and pairwise steps
converge at $\varepsilon=10^{-6}$ where the toward and midpoint steps of
\cite{gupta2026} do not. Screening reduces $10{,}000$ points to
$36$--$1{,}055$ survivors at $\varepsilon=10^{-5}$. On seven synthetic
and small-real problem classes (Figure~\ref{fig:final}, five trials
each) the optimised solver is between $1.5\times$ faster and $1.1\times$
slower than LIBSVM on the four hard-margin classes, ties LIBLINEAR on
the synthetic $L_2$ margin, and is $1.4$--$2.1\times$ slower than NuSVC
and LIBLINEAR on the $\nu$-SVM and MNIST classes. On the four classes
where the comparison configuration also converges, blocks and guard
change wall-clock by $0.85$--$1.2\times$; their clear wins are the two
where it does not (the toward step at $\varepsilon=10^{-5}$, single MDM
on MNIST).

\section{Numerical verification of the analysis}\label{app:num}
Lemma~\ref{lem:block} was checked on $4{,}000$ random ensembles
including heavily clipped and boundary cases (worst slack at machine
precision); Proposition~\ref{prop:orth} on $500$ near-orthogonal
ensembles; Lemma~\ref{lem:gapid} to $10^{-9}$ at arbitrary iterates;
Lemma~\ref{lem:twosided} on instrumented two-sided runs (no drop
iteration added a support index, and every non-drop iteration met the
gain bound with $M$ replaced by the computable upper bound
$\operatorname{diam}V+\operatorname{diam}W$); and Theorem~\ref{thm:activation} on instances whose optimal
face was computed by an exact QP: at every activating round all
zero-weight non-face points were screened, and the final working set
equalled the optimal face exactly. Over $5{,}000$ instrumented live
block steps none increased the objective. All checks are test files in
the released code.

\end{document}